\crefname{algocf}{alg.}{algs.}
\Crefname{algocf}{Algorithm}{Algorithms}
\newcommand{\R}{\mathbb{R}}
\newcommand{\E}{\mathbb{E}}
\newcommand{\argmax}{\operatorname{arg\,max}}
\newcommand{\argmin}{\operatorname{arg\,min}}
\newcommand{\bld}[1]{\boldsymbol{#1}}
\newcommand{\btheta}{\boldsymbol{\theta}}
\newcommand{\bpi}{\boldsymbol{\pi}}
\newcommand{\bTheta}{\boldsymbol{\Theta}}
\newcommand{\SPE}{\operatorname{SPE}}
\newenvironment{tightlist}
{\begin{list}{$\bullet$}{
    \setlength{\topsep}{0in}
    \setlength{\partopsep}{0in}
    \setlength{\itemsep}{0in}
    \setlength{\parsep}{0in}
    \setlength{\leftmargin}{1.5em}
    \setlength{\rightmargin}{0in}
    \setlength{\itemindent}{0in}
}
}%
{\end{list}
}
\theoremstyle{thmstyleone}%
\newtheorem{theorem}{Theorem}%  meant for continuous numbers
\newtheorem{proposition}[theorem]{Proposition}% 
\newtheorem{corollary}[theorem]{Corollary}
\theoremstyle{thmstyletwo}%
\newtheorem{example}{Example}%
\theoremstyle{thmstylethree}%
\newtheorem{definition}{Definition}%
\begin{document}

\title[Formal Contracts Mitigate Social Dilemmas in Multi-Agent RL]{Formal Contracts Mitigate Social Dilemmas in Multi-Agent Reinforcement Learning}

\author*[1]{\fnm{Andreas} \sur{Haupt}}\email{haupt@mit.edu}
\equalcont{These authors contributed equally to this work.}
\author*[1]{\fnm{Phillip} \sur{Christoffersen}}\email{philljkc@mit.edu}
\equalcont{These authors contributed equally to this work.}
\author[1]{\fnm{Mehul} \sur{Damani}}\email{mehul42@mit.edu}
\author[1]{\fnm{Dylan} \sur{Hadfield-Menell}}\email{dhm@csail.mit.edu}

\affil[1]{\orgdiv{Computer Science and Artificial Intelligence Laboratory}, \orgname{Massachusetts Institute of Technology}, \orgaddress{\street{32 Vassar Street}, \city{Cambridge}, \postcode{02139}, \state{MA}, \country{U.S.A}}}

\abstract{Multi-agent Reinforcement Learning (MARL) is a powerful tool for training autonomous agents acting independently in a common environment. However, it can lead to sub-optimal behavior when individual incentives and group incentives diverge. Humans are remarkably capable at solving these social dilemmas. It is an open problem in MARL to replicate such cooperative behaviors in selfish agents. In this work, we draw upon the idea of formal contracting from economics to overcome diverging incentives between agents in MARL. We propose an augmentation to a Markov game where agents voluntarily agree to binding transfers of reward, under pre-specified conditions. Our contributions are theoretical and empirical. First, we show that this augmentation makes all subgame-perfect equilibria of all Fully Observable Markov Games exhibit socially optimal behavior, given a sufficiently rich space of contracts. Next, we show that for general contract spaces, and even under partial observability, richer contract spaces lead to higher welfare. Hence, contract space design solves an exploration-exploitation tradeoff, sidestepping incentive issues. We complement our theoretical analysis with experiments.  Issues of exploration in the contracting augmentation are mitigated using a training methodology inspired by multi-objective reinforcement learning: Multi-Objective Contract Augmentation Learning (MOCA). We test our methodology in static, single-move games, as well as dynamic domains that simulate traffic, pollution management and common pool resource management.}

\keywords{Social Dilemma, Decentralized Training, Formal Contracts}

\maketitle

\section{Introduction}\label{sec:intro}
\begin{figure}[t!]
\centering
\includegraphics[width=\linewidth]{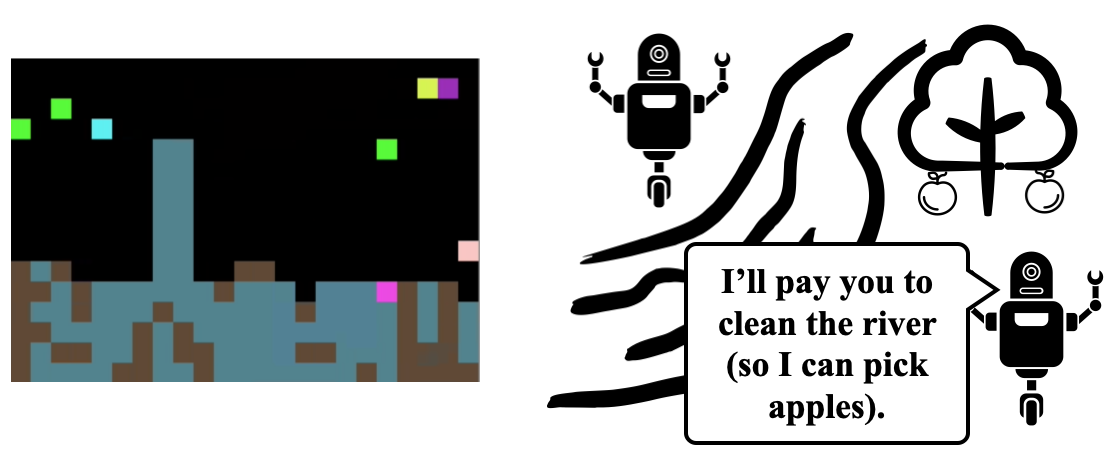}
\caption{We evaluate our method in the Cleanup domain~\citep{hughes2018inequity}. \textbf{Left}: A screenshot of the environment. The different agents correspond to the pink, yellow, and purple tiles. Agents are rewarded for picking apples (green), but apples will only grow if the river (blue) is clean of pollution (brown). Agents can clean up pollution, but aren't directly rewarded for cleaning. This creates a \emph{social dilemma} where no agents clean because they don't expect to benefit from cleaning directly.  \textbf{Right:} An illustration of the solution that our contracting augmentation facilitates. In the Cleanup domain, one agent commits to \enquote{pay} the other to clean the river. As a result, the agents are able to coordinate on policies that maximize the total reward across both agents. 
}
\label{fig:problem}
\end{figure}
We study the problem of how to get selfishly motivated agents to act pro-socially through the lens of Multi-Agent Reinforcement Learning (MARL). 
Consider the Cleanup domain, depicted in \Cref{fig:problem}. Agents get reward for picking apples that only grow if a nearby river is unpolluted. In a pro-social solution to Cleanup, agents need to work together: one cleans while the other eats apples. However, self-interested agents cannot sustain this solution. Cleaning has no direct benefit, so selfish agents focus exclusively on picking apples. A social dilemma ensues.

This article studies contracts as voluntary commitments to zero-sum modifications of the environment reward, so-called \emph{social contracts}, as a detail-free method of mitigating the consequences of conflicting algorithmic incentives. More specifically, contracts transfer rewards between agents depending on \emph{contractible observations}. Contracts are proposed by agents, and can be vetoed by any single agent---participation is voluntary. Even upon acceptance, agents may choose any action, only their rewards are changed by the contract. We show, both game-theoretically and empirically, that the possibility for agents to contract allows for optimal (in a sense we specify) system performance for any initial reward signals to agents. 

Consider again the problem of designing agents to harvest in an orchard where agents are rewarded for picking apples only. In a contract, an agent could propose that an agent receives $r_{\text{clean}}$ of reward for each polluted river square that is cleaned. All other agents are \enquote{charged} a reward penalty of $-\frac{r_{\text{clean}}}{n-1}$. The proposing agent, as well as all non-harvesting agents have an incentive to propose this contract because the expected reward from picking apples will be larger than the expected payment to others for growing them. Similarly, the other agent prefers this contract to the outcome absent a contract, where no apples grow.

As a simpler example, consider the effects of contracts in the classic Prisoner's Dilemma \cite{tucker1983mathematics}. The tables in \Cref{tab:pd} show the payoffs for the unmodified game (\Cref{subfig:pd}) and the modified incentives (\Cref{subfig:finpd}) under the following contract: 
\begin{quote}
\emph{Any agent who defects is fined $1.5$ units of reward by the other agent.}
\end{quote}
If both defect, both pay, and the payments cancel. In this modified game, cooperation is dominant, and hence $(C,C)$ is the only Nash equilibrium.
\begin{figure}[t!]
\begin{subfigure}[b]{0.45\linewidth}
\centering
\begin{tabular}{ccc}
\toprule
& $C$ & $D$ \\
\midrule
$C$ & $-1$,$-1$ & $-3$,$0$ \\
\midrule
$D$ & $0,-3$ & $-2,-2$ \\
\bottomrule
\end{tabular}
\caption{Prisoner's Dilemma}
\label{subfig:pd}
\end{subfigure}
\begin{subfigure}[b]{0.45\linewidth}
\centering
\begin{tabular}{ccc}
\toprule
& $C$ & $D$ \\
\midrule
$C$ & $-1$,$-1$ & $-1.5$,$-1.5$ \\
\midrule
$D$ & $-1.5,-1.5$ & $-2,-2$ \\
\bottomrule
\end{tabular}
\caption{After Contract}
\label{subfig:finpd}
\end{subfigure}
\caption{
(a) Prisoner's Dilemma (b) Prisoner's Dilemma after signing a contract in which a defector transfers $1.5$ reward to a cooperator. With this contract in force, cooperating becomes a dominant action for both players.}
\label{tab:pd}
\end{figure}

Would the agents agree to this contract if proposed? If it is rejected, they subsequently play the game in \Cref{subfig:pd} which has a unique Nash equilibrium of $(D, D)$, yielding a reward of $-2$ for both agents. However, if it is accepted, they subsequently play the game in \Cref{subfig:finpd} which has a Nash equilibrium of $(C, C)$, yielding a reward of $-1$ for both agents. Thus, agents want to accept the contract and subsequently play the socially optimal outcome. Hence, a possibility to commit to a state-action dependent reward transfer, or enter into a formal contract, mitigates a social dilemma, even among selfish agents.

In this article, we provide four main contributions. 
\begin{enumerate}
\item We formalize \emph{Formal Contracting} as a generic augmentation of Markov games (i.e., non-cooperative MARL);
\item We prove that this augmentation makes socially optimal behavior a \emph{subgame-perfect equilibrium}, and that every SPE of the augmented game is socially optimal, if sufficiently complex contracts are allowed and the game is sufficiently observable in a sense we specify;
\item We prove in more general settings that more expressive contracts allow for higher system performance in terms of the sum of rewards;
\item We implement the contracting augmentation using state-of-the-art deep reinforcement learning. Contracting improves system performance in several complex game environments. We propose an algorithm to overcome exploration challenges, which performs close to, or better than, a joint controller after a fixed number of time periods in complex dynamic domains such as Cleanup and Harvest, see \cite{hughes2018inequity}. 
\end{enumerate}
The rest of this article is structured as follows. We provide preliminaries and define our augmentation in \Cref{sec:background}. In \Cref{sec:theory}, we show that formal contracting mitigates social dilemmas as long as contracts can detect deviations from a socially optimal policy profile. In \Cref{sec:features} we formalize the notion of contractible features, and prove that the system performance improves with richer contracting features. We describe our evaluation methodology and introduce a learning algorithm for subgame perfection, multi-objective contract augmentation learning, in \Cref{sec:methodology}. Experimental results are presented in \Cref{sec:experiments}. We review related work in \Cref{sec:related}. In \Cref{sec:conclusion}, we discuss the real-world application and enforcement of contracts, fairness concerns, and avenues for future work. Appendices contain proofs, additional statements and experiments, a formal definition of a more general contracting augmentation, and hyper-parameter settings for our experiments.

\section{Formal Contracting}\label{sec:background}
\subsection{Background}
\subsubsection{Fully Observable Markov Games} We define an $N$-agent Fully Observable Markov Game (FOMG) as a 6-tuple, $M = \langle S,s_0, \mathbf A, T, \mathbf R, \gamma \rangle$, where
\begin{itemize}
\item $S$ is a state space;
\item $s_0 \in S$ is the initial state;
\item $\mathbf A = A_1 \times A_2 \times \dots \times A_n$ is the space of action profiles $\mathbf a = (a_1, a_2, \dots, a_n)$ for $N$ agents;
\item $T \colon S \times \mathbf A \to \Delta(S)$ is a transition function;
\item $\mathbf R \colon S \times \mathbf A \to [-R_{\max}, R_{\max}]^n$ is a (bounded) reward function mapping state-action profiles to reward vectors for the $n$ agents; and
\item $\gamma \in [0,1)$ is a discount factor.
\end{itemize}
We will use bold symbols to denote sets of tuples. Agents choose policies $\pi_i \colon S \to \Delta(A_i)$, $i=1, 2, \dots, n$. We write $\boldsymbol\pi \coloneqq (\pi_1, \pi_2, \dots, \pi_N)$ to denote a \emph{policy profile}. For a policy profile $\boldsymbol\pi$, we denote by 
\begin{equation}
V_i^{\boldsymbol\pi}(s_0) \coloneqq \E\left[\sum_{t=0}^\infty \gamma^t R_i(s_t, \mathbf a_t)\right]\label{eq:value}
\end{equation}
the \emph{value} to agent $i \in [N] \coloneqq \{ 1, 2, \dots, N\}$. The expectation in \eqref{eq:value} is on the process $s_t \sim T(s_{t-1}, \mathbf a_{t-1})$ and $a_{t,i} \sim \pi_i(s_t)$, $t=1, 2, \dots$.  A subscript ${()}_{-i}$ denotes a partial profile of policies or actions or policies excluding agent $i$, e.g., $\boldsymbol\pi_{-i} \coloneqq (\pi_1, \pi_2, \dots, \pi_{i-1}, \pi_{i+1}, \dots, \pi_N)$.

\subsubsection{Partially Observable Markov Games} An $N$-agent Partially Observable Markov Game (POMG) is an 8-tuple 
\[
P = \langle S, s_0, \Omega, \mathbf A, T, \mathbf O, \mathbf R, \gamma \rangle.
\]
$S, s_0, A$ and $T$ are defined as in Fully Observable Markov games. $\Omega$ is an observation space, and $\mathbf{O}: S \times \mathbf A \to (\Delta(\Omega)^n$ an observation model, mapping world states to a tuple of observations. In POMGs, agents' policies condition on sequences of observations and actions, $\pi: H \to \Delta(A_i)$, where $H$ denotes the set of observation-action histories. Denote
\[
V_i^{\boldsymbol\pi}(h) \coloneqq \E\left[\sum_{t=0}^\infty \gamma^t R_i(s_t, \mathbf a_t) \middle\vert h_t\right]
\]
where the expectation is given with respect to the process $h_{t+1} = (h_t: s_t, a_t, o_t)$, where $:$ denotes concatenation, $s_t \sim T(s_{t-1}, \mathbf{a}_{t-1})$ and $a_{t, i} \sim \pi_i(s_t)$. Write $V_i^{\boldsymbol{\pi}}(h_0) \coloneqq V_i^{\boldsymbol{\pi}}(\emptyset)$ for the empty history $\emptyset$. For the choice $\Omega = S$, $O_i (s,\mathbf a) = (s,\mathbf a)$ for all $s \in S$, the class of POMGs recovers FOMGs.

We will consider an additional observation model which we call the \emph{contractible observations}. Let $P$ be a POMG, and $O_0: S \times \mathbf A \to \Delta(\Omega)$. We denote the observations depending on a particular strategy profile $\bpi$ by a superscript, i.e. $o_t^{\bpi}$ is the observation at timepoint $t$ when agents follow the strategy profile $\bpi$. 

\begin{definition}
    Observation model $O_0$ is \textit{sufficient to detect deviators from $\boldsymbol{\pi}$} in POMG $P$ if, for all $i=1, 2, \dots, n$ and $\pi_i^\prime \ne {\pi}_i$, we have that 
    \[
    \mathbb{P}\left[\exists T \in \mathbb{N}: o_{0t}^{(\pi_i', \bpi_{-i})} \neq o_{0t}^{\bpi}\right] = 1.
    \]
\end{definition}
To unpack this equation, an observation model is sufficient to detect deviation if some future observations are able to distinguish $\bpi$ from any policy profile in which a single agent deviates, $(\pi_i, \boldsymbol{\pi}_{-i})$.

\subsubsection{System Performance and Optimality}
This article measures system performance through \emph{welfare},
\[
W^{\boldsymbol\pi} (s_0)\coloneqq \sum_{i=1}^n V_i^{\boldsymbol\pi}(s_0).
\]
We refer to a policy profile that maximizes welfare as \emph{jointly optimal}: $\boldsymbol\pi^* \in \argmax_{\boldsymbol\pi} W^{\boldsymbol\pi} (s_0)$. A policy profile $\boldsymbol\pi$ is \emph{Pareto-optimal} if there is no policy profile $\boldsymbol\pi'$ such that $V^{\boldsymbol\pi}_i (s_0) \le V^{\boldsymbol\pi'}_i (s_0)$ for all $i=1, 2, \dots, N$, with a strict inequality for at least one agent. Intuitively, in such profiles, there are no \enquote{win-wins}: no agent can attain higher reward without at least one other agent losing reward.

\subsubsection{Stable Policy Profiles and Equilibria}
In social dilemmas, social and individual incentives diverge. In our game-theoretic analysis, we use an equilibrium notion to capture outcomes of selfish incentives. One potential solution concept is \emph{Nash equilibrium}. A policy profile is Nash equilibrium if unilateral deviation is suboptimal for all agents. Formally, a policy profile $\boldsymbol\pi$ is a Nash equilibrium if for any agent $i \in [n]$ and any policy $\pi_i': S \to \Delta(A_i)$ (or $\pi_i^\prime: H \to \Delta(A_i)$ in POMGs), $V_i^{\boldsymbol\pi} (s_0) \ge V_i^{(\pi_i', \boldsymbol\pi_{-i})} (s_0)$.

While this solution concept is common, it has its drawbacks. For example, in Cleanup, a policy profile in which agents never clean under some contract even if it is in their best interest, and another agent not proposing it, might be a Nash equilibrium: No agent would benefit unilaterally from changing their behavior. The \enquote{threat} of one of the agents to not clean, however, is non-credible, as, when the contract were active, they would rather clean. Compare \cite[Section 5.5]{osborne1994course} on non-credible threats.

To avoid non-credible threats, we model selfish incentives in MARL with \emph{subgame-perfect equilibria} (SPE). Subgame perfection requires that for any state $s$ or history $h$, there cannot be a profitable deviation to another policy, for any agent. This is stronger than a Nash equilibrium, which only requires this to hold at the initial state $s_0$. 
\begin{definition}[Subgame-Perfect Equilibrium]
A policy profile $\boldsymbol\pi$ is a \emph{subgame-perfect Nash equilibrium} or \emph{subgame-perfect} for an FOMG if for all histories $h \in H$, agents $i=1,2 , \dots, N$ and policies $\pi_i' \colon S \to \Delta (A_i)$, it holds
\[
V_i^{\boldsymbol\pi} (h) \ge V_i^{(\pi_i', \boldsymbol\pi_{-i})} (h).\footnote{Some histories might never be reached. We will assume for this article that such \enquote{off-path} values can be chosen arbitrarily. The research program on \emph{equilibrium refinements} studies how to discipline off-path beliefs, compare \cite[ch. 4]{gibbons1992game}.}
\]
\end{definition}

\subsection{The Contracting Augmentation}\label{sec:augmentation}
\begin{figure}[ht]
\centering
\includegraphics[width=\linewidth]{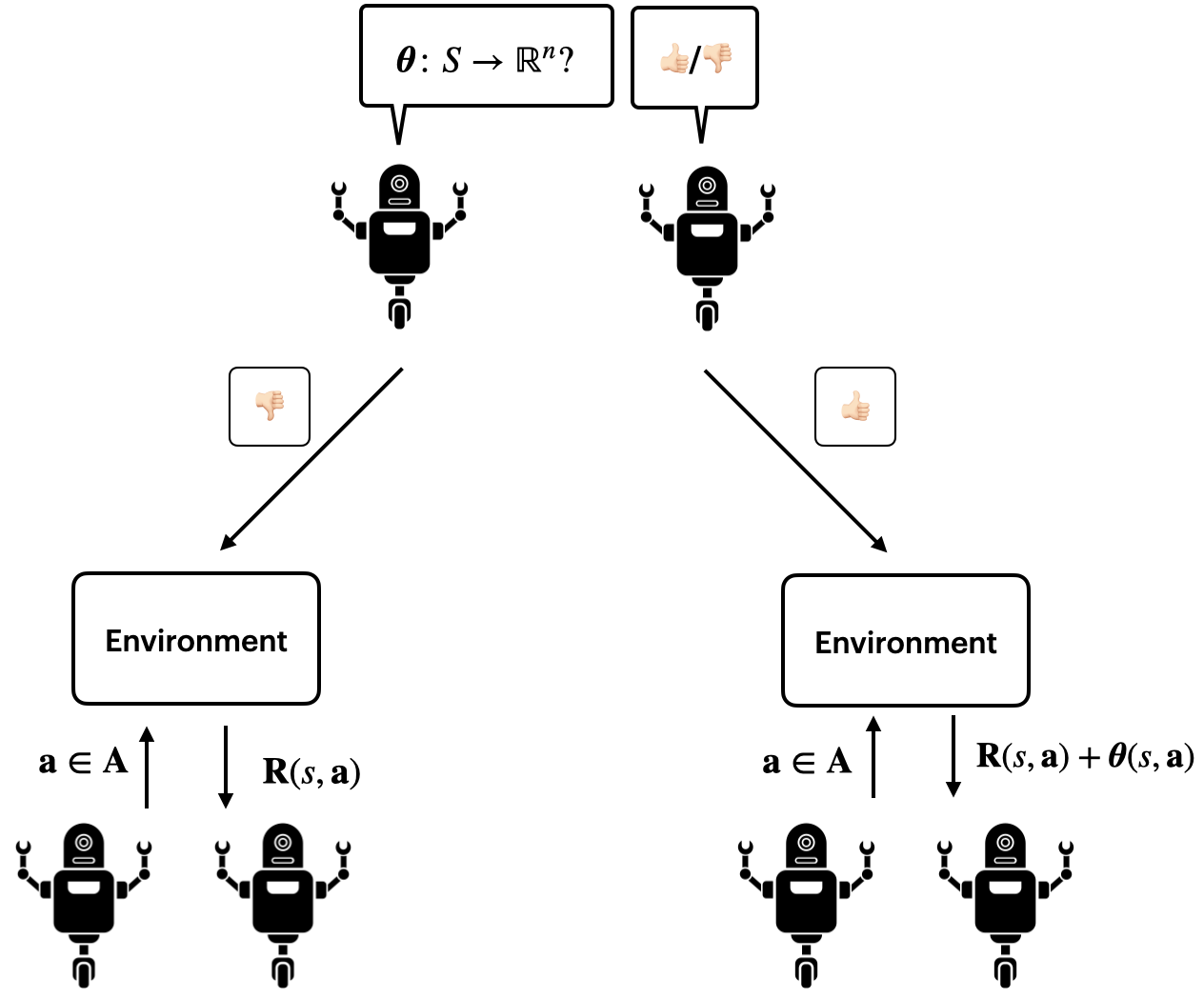}
\caption{The Contracting Augmentation.. \textbf{Top:} Agents can propose \emph{contracts}, state dependent, zero-sum, additive augmentations to their reward functions. Agents can accept or decline contracts. \textbf{Left:} In case of declination, the interaction between agents happens as before. \textbf{Right:} In case of acceptance of the contract, the reward of the agents is altered according to the rules of the contract.}
\label{fig:cover}
\end{figure}
Contractible observation-dependent reward transfers are the main object of study in this article. Here, we formalize these augmentations within Markov games. 
\begin{definition}[Contract]
A contract is a function $\boldsymbol\theta \colon \Omega \to \R^N$ whose range consists of zero-sum vectors, i.e. 
\[
\sum_{i=1}^N \theta_i = 0
\]
for any $(\theta_1, \theta_2, \dots, \theta_n) \in \operatorname{range} (\theta) $. We denote a generic set of contracts by $\boldsymbol\Theta$. 
\end{definition}

Contracts act on the reward that agents get once they are accepted. Moreover, the information used by the contracting logic is modeled by a corresponding contracting observation.

\begin{definition}[Contracting Model]
    A contracting model is a 2-tuple $\mathcal{C} = \langle O_0, \bld{\Theta}\rangle$. $O_0$ is an observation model and $\bld{\Theta}$ is a contract space.
\end{definition}

This allows us to define the contract augmentation.

\begin{definition}[$\boldsymbol{\mathcal{C}}$-Augmented Game]
Let $M = \langle S, s_0, \mathbf A, T, \mathbf R, \gamma \rangle$ be an FOMG and $\mathcal{C} = (O_0, \boldsymbol\Theta)$ be a contracting model. The $i$-proposing, \emph{$\boldsymbol{\mathcal{C}}$-augmented game} is $M^{\boldsymbol{\mathcal{C}}} = \langle S', (i, \boldsymbol 0), \mathbf A', T', \mathbf R', \gamma \rangle$, with the components listed below. If the contractible  observations are clear from context (e.g. when the full state $S$ is provided to the contract), we write $M^\Theta$ instead of $M^\mathcal{C}$.
\end{definition}
We describe the notation first for FOMGs.
\paragraph{States.} The augmented state space is
\[
S' = ([n] \cup S) \times (\{\boldsymbol 0\} \cup \boldsymbol\Theta).
\]
States have the following meanings:
\begin{itemize}
\item $(i, \boldsymbol 0)$: Agent $i$ has the opportunity to propose a contract $\boldsymbol\theta \in \boldsymbol\Theta$;
\item $(i, \boldsymbol\theta)$: $\boldsymbol\theta \in \boldsymbol\Theta$ awaits acceptance or rejection by all agents;
\item $(s, \boldsymbol 0)$: The game is in state $s \in S$ with a null contract, $\boldsymbol 0(s,a) = 0$, for all $s \in S, \mathbf a \in \mathbf A$, in force;
\item $(s, \boldsymbol\theta)$:  The system is in state $s$ with contract $\boldsymbol\theta \in \boldsymbol\Theta$ in force.
\end{itemize}
We will implicitly assume that contractible observation models always perfectly observe the proposed contract, and whether contracts were accepted. 
\paragraph{Actions.}
The action spaces for the agents are
\begin{align*}
A'_i &= A_i \cup \boldsymbol\Theta \cup \{\operatorname{acc}\}
\end{align*}
which corresponds to actions in the game ($A_i$), proposal actions ($\boldsymbol\Theta$) and the acceptance action ($\{\operatorname{acc}\}$). 
\paragraph{Transitions.}
There are deterministic transitions, given by
\begin{align*}
    T'((i, \boldsymbol 0), (\boldsymbol\theta, \mathbf a_{-i})) & = \begin{cases}
    (i, \boldsymbol\theta), & \text{ for any $(\btheta, \mathbf a_{-i})$}\\
    (i, \mathbf{0}) & \text{else.}\end{cases}\\
    T'((i, \boldsymbol\theta), \mathbf a) &=\begin{cases} (s_0, \boldsymbol\theta) & \text{if } \mathbf a = \textbf{acc} \\ (s_0, \boldsymbol 0) & \text{otherwise.} \end{cases}
\end{align*}
for any contract $\boldsymbol\theta \in \boldsymbol\Theta$ and any action profile $\mathbf a \in \mathbf A$. Here, we denoted $\textbf{acc} \coloneqq (\operatorname{acc}, \operatorname{acc}, \dots, \operatorname{acc})$ the profile of unanimous acceptance of a contract.

Transitions in states $(s, \boldsymbol 0)$ and $(s, \boldsymbol\theta)$ are as in the underlying game $M$,
\[
T'((s, \boldsymbol\theta), \mathbf a) = T(s, \mathbf a)
\]
for any $s\in S$, $\boldsymbol\theta \in \boldsymbol\Theta$ and $a \in A$. 
\paragraph{Rewards.}
\begin{equation*}
\begin{split}
\mathbf R'((s, \boldsymbol\theta), \mathbf a) &= \mathbf R(s,\mathbf a) + \boldsymbol\theta(s, a),\\
\mathbf R'((i, \boldsymbol\theta), \textbf{acc}) &= \boldsymbol \theta(\operatorname{acc}).
\end{split}
\end{equation*}
for $\boldsymbol\theta \in \boldsymbol\Theta $ and $s \in S$. In states $(s, \btheta)$, the actions $\operatorname{acc}$ and $\btheta$ yield a high negative reward. The first line means that depending on a state-action profile pair, reward is transferred between the agents. The second line refers to reward being transferred on signing a contract. 

\paragraph{Partially Observable Markov Games} In order to generalize the above construction to POMGs, we consider a set of contracts $\Theta = \{ \Omega \to \R^n \}$ which depend on the contractible observation model $O_0: S \times \mathbf A \to \Delta (\Omega)$, in general distinct from the observation models for each agent. We recover contracting in an FOMG for contractible observations $O_0 (s,\mathbf a ) = (s, \mathbf a)$. The environment dynamics are the same as in the augmentation for FOMGs. It remains, hence, to show the observation and reward model.

The observation space for the agents then becomes $\Omega'= ([n] \cup \Omega) \times (\{\boldsymbol{0}\} \cup \Theta)$. Agents observe the proposer, the contract proposed or in force, and, if they are in a game state, they also observe 
\begin{align*}
    O'_i( (s, \boldsymbol\theta), \mathbf a) &= \begin{cases} (O_i(s, \mathbf a), \boldsymbol\theta) & s \in S  \\ (x, \boldsymbol\theta) & s \in [n]   \end{cases}
\end{align*}
To unpack this notation, observe that $(s, \boldsymbol\theta) \in S'$ is a state in the augmented game, and $\mathbf a$ is an action profile.

In the contracting augmentation, once enforced, the expected rewards of agents are directly changed, according to 
\[
R_i'(s,\mathbf a) = R_i(s,\mathbf a) + \E_{o \sim O_0 (s, \mathbf a)} [\theta_i (o)].
\]
That is, the rewards are sampled according the observation model. As in FOMGs, $R( (i, \btheta), \mathbf{\operatorname{acc}}) = \btheta(\operatorname{acc})$. Note that agents maximize their reward as modified under the contract, so there is no concept of \enquote{breaking} a contract. The incentives that align agents' behavior with pro-social goals are encoded in the reward function. 

\paragraph{Expressiveness.} We define a notion of comparison for a subclass of contracting models. A contracting model $\mathcal{C} = (O_0, \bld{\Theta})$ has greater \emph{expected transfer expressiveness} than $\mathcal{C}^\prime = (O_0', \bld{\Theta'})$, written $\mathcal{C}^\prime \preceq_{\text{ETE}} \mathcal{C}$ if, for all $\bld{\theta^\prime} \in \bld{\Theta^\prime}$, there exists $\bld{\theta} \in \bld{\Theta}$ such that
    \begin{equation}
    \mathbb{E}_{o \sim O_0'(s, \mathbf a)}[\bld{\theta^\prime}(o)] = \mathbb{E}_{o \sim O_0(s, \mathbf a)}[\bld{\theta}(o)]. \label{eq:expressiveness}
    \end{equation}
\section{Formal Contracting Mitigates Social Dilemmas}\label{sec:theory}

In this section, we show that formal contracting, given  a sufficiently rich set of contracts, mitigates social dilemmas in many fully and partially observable settings. Find a summary of our results in the first column of \Cref{fig:table}.

\begin{figure}
    \centering
    \includegraphics[width=1\linewidth]{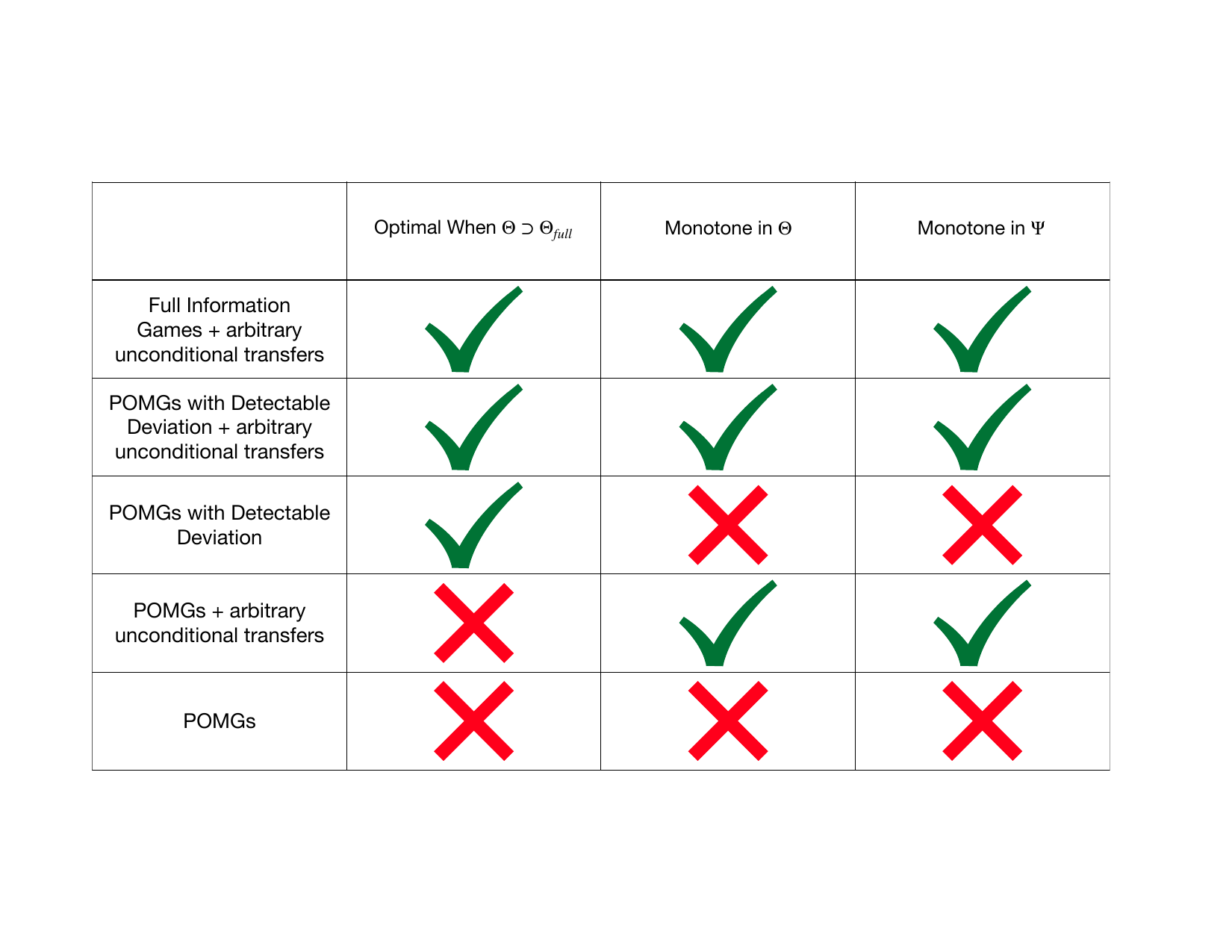}
    \caption{A table summarizing the theoretical results proven in \Cref{sec:theory} and \Cref{sec:features}. The columns along the top list theorem types, which are (1) optimality when you have a contracting space with sufficient richness, (2) monotonicity in the size of the contract space (3) monotonicity in the space of contractible features. The rows list the various problems setups considered in these sections.} 
\label{fig:table}
\end{figure}

\begin{theorem}[Optimality of Contracting]\label{thm:main}
Let $M = \langle S, s_0, \mathbf  A, T, \mathbf R, \gamma \rangle$ be a Fully Observable Markov game. If all observations are contractible, $O_0 (s, \mathbf a) = (s, \mathbf a)$, then the contracting space
\[
\Theta_{\operatorname{full}} = \{(S \times \mathbf A) \cup \{\operatorname{acc}\} \to \R\},
\]
\begin{enumerate}
    \item there exists an SPE of $M^{\mathcal C}$;
    \item for any SPE $\bpi$ of $M^{\mathcal C}$, there is a jointly optimal policy profile $\bpi^*$ of $M$ such that $\bpi ((s, \boldsymbol\theta), \mathbf a) = \boldsymbol\pi^* (s, \mathbf a)$ for the contract $\boldsymbol\theta$ that agent $i$ chooses in $\boldsymbol\pi$.
\end{enumerate}
\end{theorem}
The theorem shows that, under the assumption of richness, contracting mitigates social dilemmas in game-theoretic equilibrium.
\begin{proof}[Proof Sketch.]
    For any socially optimal policy profile $\pi^*$, we can construct a \enquote{forcing contract} $\boldsymbol \theta^*$, which sets a high penalty for not playing $\bld \pi^*$. A signing bonus for this contract transfers all utility gained or lost by playing $\boldsymbol \theta^*$ for all agents but the proposer to the proposer. The main part of the proof is to show that this is the (only) optimal choice for the proposer from $\boldsymbol \Theta_{\operatorname{full}}$, subject to agents accepting the contract.
\end{proof}
The full proof can be found in \Cref{apx:proofs}. The conclusions of this theorem, along with those of \Cref{thm:monotonicity}, are illustrated in \Cref{fig:full_obs}. 
\begin{figure}
    \centering
    \includegraphics[width=1\linewidth]{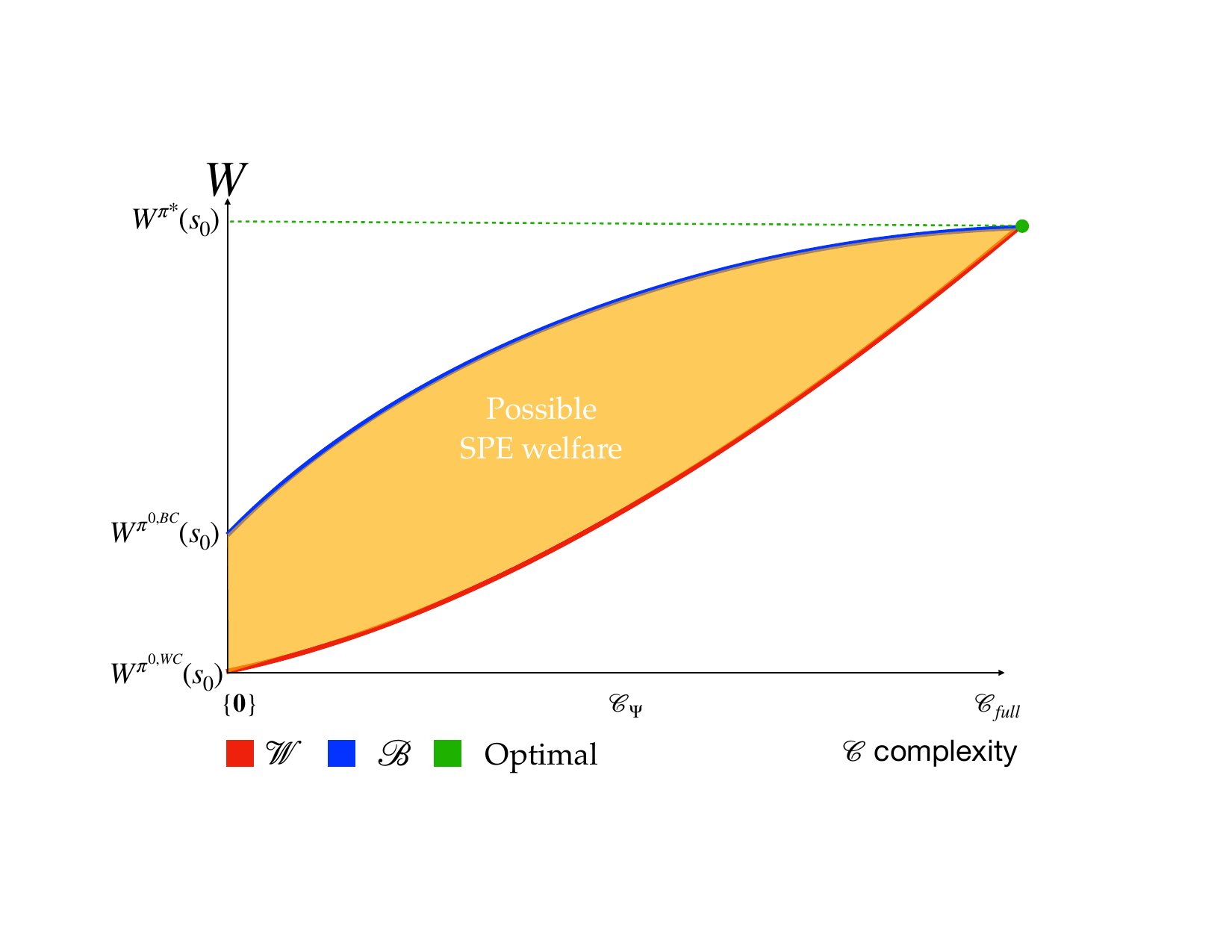}
    \caption{Under detectable deviations, possible welfare levels at equilibrium improve to optimality as the complexity of $\Theta$ grows, if spaces have AUT.} 
\label{fig:full_obs}
\end{figure}
The same result continues to hold for POMGs if deviations are detectable. In this case, it might not be immediately possible  since the policy can simply wait a finite amount of time until a deviation occurs, and the precise agent that committed it is determined, by the detectable  deviation assumption. This is more precisely articulated below. 

\begin{theorem}[Optimality of Contracting with Detectable Deviations]\label{thm:detecatbledev}\label{thm:partial_obs_main}
Let $P = \langle S, s_0, \Omega, \mathbf  A, T, \mathbf{O}, \mathbf R, \gamma \rangle$ be a POMG, and $O_0$ a contractible observation model that has detectable deviations from some socially optimal policy profile $\pi^*$. Then, the same conclusions as in \Cref{thm:main} hold, if we require $\bTheta = \bTheta_{\operatorname{\operatorname{full}}}$ and $\mathcal{C} \succeq_{\operatorname{ETE}} \mathcal{C}_{\operatorname{full}} = (\bTheta_{\operatorname{\operatorname{full}}}, O_0)$.
\end{theorem}
This result, however, fails to hold if the joint observations are unable to detect deviations. One trivial example of this is if contractible observations are perfectly uninformative, $\Omega = \{0\}$. In this case, reward transformations cannot condition on actions played in the course of the game at all, and hence are strategically irrelevant. The augmented game has the same subgame-perfect equilibria as the original game. 
\begin{figure}
    \centering
    \includegraphics[width=1\linewidth]{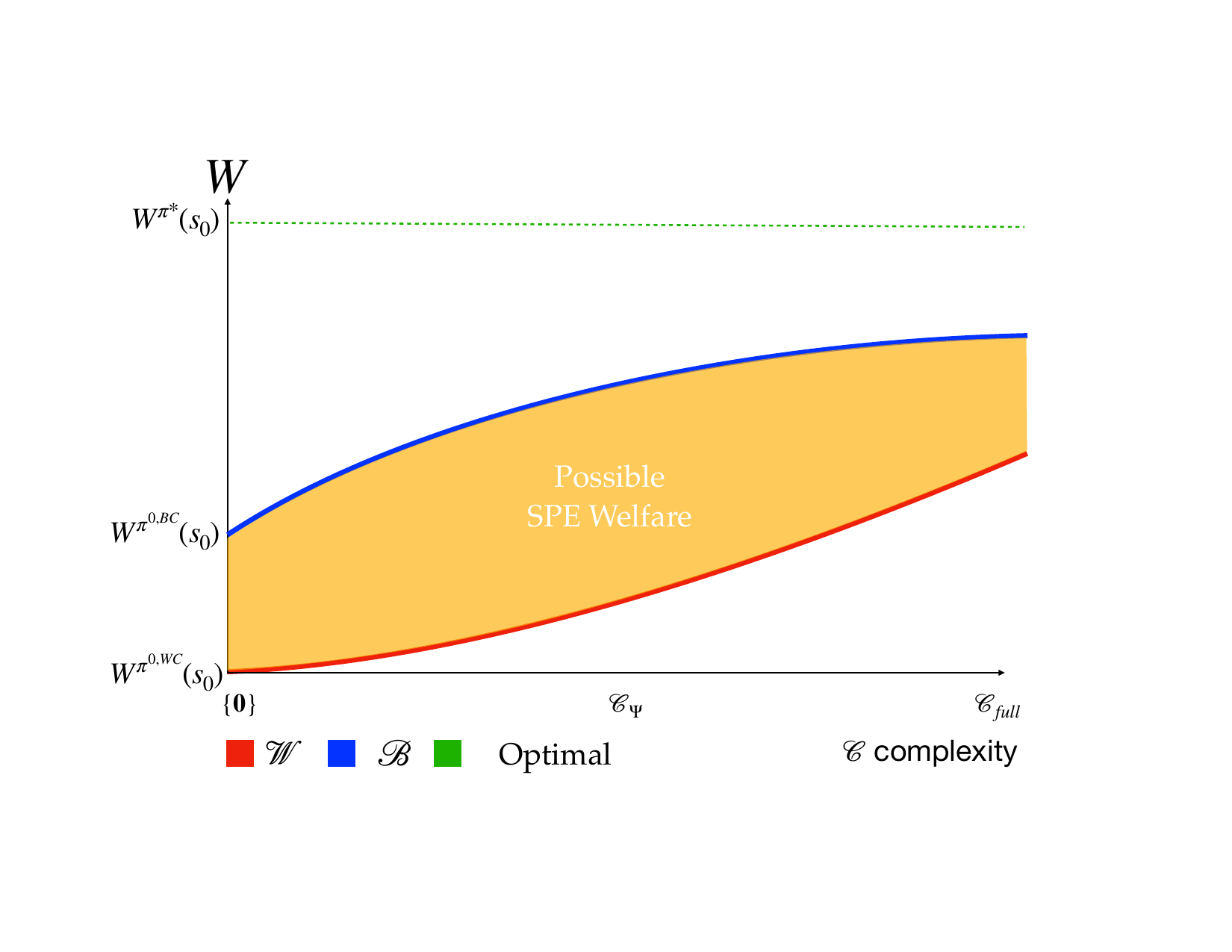}
    \caption{Under general partial observability, possible welfare levels at equilibrium improve, without necessarily reaching optimal social welfare, assuming arbitrary unconditional transfers.} 
\label{fig:partial_obs}
\end{figure}

\section{More Possible Contracts Improve Welfare}\label{sec:features}

Beyond full observability or detectable deviations, contracts are still applicable. We show that both in FOMGs and POMGs, system performance as measured by welfare improves for more expressive contracting models in the sense of $\preceq_{\text{ETE}}$.

Restrictions on the capability to contract (same contractible observation model, but restriction on the contracts) correspond to some contracts being impermissible in the environment. Reasons for such a restriction in practice may be desirable to incorporate domain knowledge into the contract design, or simply to improve exploration landscape for reinforcement learning.

In defining our order on system performance, we use $M+\btheta$ as the game following the acceptance of a contract $\btheta$. Note that the system performance definition is complicated by the existence of potentially multiple equilibria, compare \cite{folktheorem}. We can show, though, that the performance of the worst \emph{and} the best stable outcome, that is, a variant of welfare of the worst and best subgame-perfect equilibria, improve. As it might be possible that the proposing agent has an optimal contract they would like to propose, e.g. if there is a limit point of utility for them that is not achievable, we define a notion of best and worst equilibrium that considers cases in which the proposer takes an arbitrarily close to optimal contract. For finite contract spaces, this issue disappears, and the reader can view the following results as being about the worst and best subgame-perfect equilibria of $M^{\mathcal C}$.
\begin{definition}[WCSPW and BCSPW]
    Let $M$ be a Markov game, and $\mathcal{C} = \langle O_0, \bld\Theta\rangle$ be a contracting model. For $\btheta \in \bld\Theta$, we define the \emph{worst-case} resp. \emph{best-case subgame-perfect welfare under $\btheta$} as
    \begin{equation}
    \begin{split}
    \operatorname{WCSPW}_\mathcal{C}(\btheta) &\coloneqq  \inf_{\bld\pi \in \mathrm{SPE}(M+\btheta)} W^{M+\theta, \bld\pi}(s_0)\\
    \operatorname{BCSPW}_\mathcal{C}(\btheta) &\coloneqq  \sup_{\bld\pi \in \mathrm{SPE}(M+\btheta)} W^{M+\btheta, \bld\pi}(s_0).
    \end{split}\label{eq:innersupattained}
    \end{equation}
    We also define the supremum over the contract choices.
    \begin{equation}
    \begin{split}
   \mathcal{W}(\mathcal{C}) &= \sup_{\btheta \in \Theta} \operatorname{WCSPW}_\mathcal{C}(\btheta)\\
    \mathcal{B}(\mathcal{C}) &= \sup_{\btheta \in \Theta} \operatorname{BCSPW}_\mathcal{C}(\btheta).\nonumber
    \end{split}\label{eq:attainmentofsupremal} 
    \end{equation}
\end{definition}
These are bounds on the welfare under subgame perfection, which is attained if the proposer's choice of a contract has an optimal choice. Note that otherwise a subgame-perfect equilibrium fails to exist, as the following proposition shows.
\begin{proposition}[$\mathcal{W}, \mathcal{B}$ bound equilibrium welfare]\label{thm:autspace}
    Let $\bTheta$ admit arbitrary unconditional transfers, augmenting a Markov game $M$. Then, we have that, for any $\bpi \in \mathrm{SPE}(M^{\mathcal{C}})$
    \[
    \mathcal{W}(\mathcal{C}) \le W^\pi(s_0) \le \mathcal{B}(\mathcal{C}).
    \]
    If $\sup_{\btheta \in \bTheta} \operatorname{WCSPW}_\mathcal{C}(\btheta)$ resp. $\sup_{\btheta \in \bTheta} \operatorname{BCSPW}_\mathcal{C}(\btheta)$ are attained, and moreover for such $\btheta^*$ we can take $\bpi^*$ attaining $\operatorname{WCSPW}_\mathcal{C}(\btheta^*)$ resp. $\operatorname{BCSPW}_\mathcal{C}(\btheta^*)$, then there is $\bpi \in \SPE(M^{\mathcal C})$ such that $W^{\bpi} (s_0) = \mathcal W (\mathcal C)$ resp. $W^{\bpi} (s_0) = \mathcal B (\mathcal C)$. 
\end{proposition}
\begin{proof}[Proof Sketch.]
    We sketch this proof for first inequality. The second is analogous. 
    
   We first show the second assertion assuming that the supremum in \eqref{eq:innersupattained} and \eqref{eq:attainmentofsupremal} each are attained. Let $\btheta^* \in \argmax_{\btheta \in \bTheta} \mathrm{WCSPW}_\mathcal{C}(\btheta)$ and denote by $\bpi^*$ be an SPE attaining \eqref{eq:innersupattained} and $\bpi$ any subgame-perfect equilibrium in $M$. We construct an SPE for $M^{\mathcal{C}}$, by first defining $\btheta^*_{\text{tr}}$ (tr referring to \enquote{transfer}) as
    \[
    \theta^*_{\text{tr}}(o) = \begin{cases}
       \mathbf{v} & o = \mathbf{\operatorname{acc}}\\
       \theta^*(o) & o \neq\mathbf{\operatorname{acc}}
    \end{cases}
    \]
    where the vector $\mathbf v$
    \[
    \mathbf v = \begin{pmatrix}
        \sum_{i=2}^n V_i^{M+\btheta^*, \bpi^*}(s_0) - V_i^{M, \bpi}(s_0) \\
        V_2^{M, \bpi^*}(s_0) - V_2^{M+\btheta^*, \bpi}(s_0) \\
        \vdots \\
        V_n^{M, \bpi^*}(s_0) - V_n^{M+\btheta^*, \bpi}(s_0)
    \end{pmatrix}
    \]
    We have $\btheta^*_{\text{tr}} \in \bTheta$ as $\bTheta$ admits arbitrary unconditional transfers. It is an SPE as no agent at no point has an incentive to deviate. All agents except for the proposing agent are compensated exactly to the point of indifference; the proposing agent maximizes its reward, under the constraint of agents accepting the contract. Now, taking an arbitrary SPE, we have that \textit{full surplus extraction}, i.e. unconditionally transferring all excess reward from the non-proposers to the proposer, must occur at every SPE (otherwise, the proposing agent could unilaterally change by simply performing this extraction, and changing nothing else). But this directly ties the proposer's reward at all SPE to the welfare attained across all agents: hence, in order to get better payoff than $\mathcal{W}(\mathcal{C})$ at equilibrium, the welfare attained at equilibrium must be higher---hence, $\mathcal{W}(\mathcal{C})$ acts as a lower bound on the possible welfare attained across all SPE of the contracting game. 

    By choosing $\btheta^*$ to be close to $\sup_{\btheta \in \bTheta} \mathrm{WCSPW}_\mathcal{C}(\btheta)$ we can generalize to the case where the exact value is not attained.
\end{proof}
The main theorem of this section is the following monotonicity result.
\begin{theorem}[Monotonicity in Contract Spaces]\label{thm:monotonicity}
    We have that the bounds $\mathcal{W}(\mathcal{C})$ and $\mathcal{B}(\mathcal{C})$ are both monotonically increasing quantities in $\mathcal{C}$: that is, when $\mathcal{C}_1 \preceq_{\operatorname{ETE}} \mathcal{C}_2$, we have $\mathcal{W}(\mathcal{C}_1) \le \mathcal{W}(\mathcal{C}_2)$ and $\mathcal{B}(\mathcal{C}_1) \le \mathcal{B}(\mathcal{C}_2)$
\end{theorem}

\begin{proof}
    Let $\mathcal{C}_1 \preceq_{\operatorname{ETE}} \mathcal{C}_2$. We show the proof only for $\mathcal{W}$: the proof for $\mathcal{B}$ is identical. In order to establish the desired inequality, given that the expression
    
    \[
    \mathcal{W}(\mathcal{C}) = \sup_{\theta \in \Theta} \mathrm{WCSPW}_\mathcal{C}(\theta)
    \]

    are defined by suprema, it is enough to show that, for any $\theta_1 \in \Theta_1$, there exists a $\theta_2 \in \Theta_2$ such that $\mathrm{WCSPW}_{\mathcal{C}_1}(\theta_1) \le \mathrm{WCSPW}_{\mathcal{C}_2}(\theta_2)$. But by definition of $\preceq_{\operatorname{ETE}}$, for any $\theta_1 \in \Theta_1$, there exists a $\theta_2 \in \Theta_2$ which matches expected transfers at all state-action transitions---hence all payoffs in the base POMG are matched exactly by such $\theta_2 \in \Theta_1$ at all states and observation histories. Hence, all subgame-perfect equilibria attainable under $\mathcal{C}_1$ are attainable in $\mathcal{C}_2$, as needed. 
\end{proof}

The implications of this result are illustrated in \Cref{fig:full_obs} and \Cref{fig:partial_obs}---specifically, whenever contracting models are nested in expressiveness, welfare in subgame-perfect equilibrium increases, whether or not optimal performance is attained in the limit of complexity. We apply \Cref{thm:monotonicity} in special cases.

\subsection{Deterministic Contractible Features}

Here, we formalize the relationship between contracting models and contracting on a set of features $\phi$ of the state $s \in S$. Let $\psi: S \to \Psi$ be a generic (deterministic) feature map. We define the feature-based contracting model on feature mapping $\bld \psi$ through the observation model $O_0 (s, \mathbf a) = \psi (s)$, when paired with the space of all $\psi$-dependent contracts $\Theta_{\operatorname{full}}$, and denote it by $\mathcal C_{\psi} = (\bTheta_{\psi}, \psi)$.
\begin{corollary}[Feature Monotonicity]\label{thm:feature_monotonicity}
    Let $M$ be a Markov game and $\psi_1, \psi_2$ two feature models such that $\kappa: \Psi_2 \to \Psi_1$ and $\kappa \circ \psi_2 = \psi_1$. Then,
    \begin{align*}
    \mathcal{W}(\mathcal{C}_{\psi_1}) &\le \mathcal{W}(\mathcal{C}_{\psi_2})\\
    \mathcal{B}(\mathcal{C}_{\psi_1}) &\le \mathcal{B}(\mathcal{C}_{\psi_2}).
    \end{align*}
\end{corollary}
\begin{proof}
    By \Cref{thm:monotonicity}, it suffices to show that $\mathcal{C}_{\psi_1} \preceq_{\operatorname{ETE}} \mathcal{C}_{\psi_2}$. Let $\btheta_1 \in \bTheta_{\psi_1}$. We have that 
    \begin{align*}
        \btheta_{1} \circ \psi_1 (s, \mathbf a) 
        &= \btheta_{1} \circ (\kappa \circ \psi_2) (s, \mathbf a)\\
        &= (\btheta_{1} \circ \kappa) \circ \psi_2 (s, \mathbf a).
    \end{align*}
    Hence, in the definition of $\preceq_{\text{ETE}}$, we can choose $\btheta_{2} = \btheta_{1} \circ \kappa$ as a contract in $\Theta_{\psi_2}$. As it is pointwise the same contract, also the implied payoffs in \Cref{eq:expressiveness}.
\end{proof}
This result gives that, by \Cref{thm:autspace}, adding features to your contracting space shifts the region of possible equilibrium welfares upwards. This motivates the monotonicity in \Cref{fig:full_obs} and \Cref{fig:partial_obs}. 

\subsection{Random Contractible Features in a Fully Observable Markov Game}
Noisy features still give rise to monotonicity. Concretely, let $\psi: S \times Z \to \Psi$ be a mapping for a random variable $Z$. This gives rise to a contractible observation model $O_0 (s, \mathbf a) = \psi (s, z)$, which may be random.
\begin{proposition}[Monotonicity of Random Features]\label{thm:randomfeat}
    Let $M$ be an FOMG, and suppose we have that $s \to \psi_2 \to \psi_1$ form a Markov chain. Then, $\mathcal{W}(\mathcal{C}_{\psi_1}) \le \mathcal{W}(\mathcal{C}_{\psi_2})$ and $\mathcal{B}(\mathcal{C}_{\psi_1}) \le \mathcal{B}(\mathcal{C}_{\psi_2})$. 
\end{proposition}
Note that this result subsumes the deterministic case, as the second arrow in the above Markov chain takes the role of $\kappa$ in that definition.
\begin{proof}
    We proceed, analogously to the deterministic feature case, by showing that any random contract $\btheta_1 \in \bTheta_{{\psi}_1}$ has expected transfers matched by some contract $\btheta_2 \in \bTheta_{{\psi}_2}$, hence verifying $\mathcal{C}_{\psi_1} \preceq_{\operatorname{ETE}} \mathcal{C}_{\psi_2}$, and then applying \Cref{thm:monotonicity} for the desired result. 

    Let $\btheta_1 \in \bTheta_{{\psi}_1}$. Then, since $S \to {\psi_2} \to {\psi_1}$ is a Markov chain, we know there exists a Markov kernel $\kappa$ s.t. $\psi_1 = \kappa \circ \psi_2$. From this, we define the contract $\btheta_{2} \in \bTheta_{\psi_2}$ by the following expectation
    \[
    \btheta_{2} \coloneqq  \mathbb{E}[\btheta_1|\psi_2],
    \]
    which is a function only of $\phi_2$. This contract choice is as desired since, by the tower property of conditional expectation:
    \begin{align*}
        \mathbb{E}_{\phi_2 \sim \psi_2(s, \mathbf{a})}[\btheta_2(\phi_2)] = \mathbb{E}_{\phi_1 \sim \psi_1(s, \mathbf{a})}[\mathbb{E}_{\phi_2 \sim \kappa(\phi_1)}[\btheta(\phi_2)|\phi_1]] = \mathbb{E}_{\phi_1 \sim \psi_1(s, \mathbf{a})}[\btheta_1(\phi_1)]
    \end{align*}
    Hence, $\mathcal{C}_{\bld \psi_2} \preceq_{\operatorname{ETE}} \mathcal{C}_{\bld \psi_1}$, and we can apply \Cref{thm:monotonicity}, as needed.
\end{proof}

\subsection{Random Contractible Features in a Partially Observable Markov Game}

Note that, in no step for the above sections do we rely on full observability for indiviual agents. Moreover, the case of random features in a deterministic game is subsumed by different choices of contractible observations $O_0$, and so no further extension is required on that front. Therefore, we have the fully general result:

\begin{theorem}\label{thm:generalpomg}
     Let $P = \langle S, s_0, O,  \mathbf  A, T, \mathbf{O}, \mathbf R, \gamma\rangle$ be a POMG, and let $\mathcal{C} = \langle O_0, \Theta \rangle$ be a contracting model. Then, the conclusions in \Cref{thm:autspace}, \Cref{thm:feature_monotonicity}, \Cref{thm:randomfeat} all continue to hold for $P^\mathcal{C}$, when $\operatorname{WCSPW}_\mathcal{C}$ and $\operatorname{BCSPW}_\mathcal{C}$ minimize and maximize over the space of all equilibria of \textit{recurrent} policy profiles $\Pi: H_i \to \Delta A_i$, and with contracts depending on contracting observation $O_0$. 
\end{theorem}

\subsection{Contracting in History-Transparent POMGs and Decentralized Partially Observable Markov Decision Processes}

As depicted \Cref{fig:partial_obs}, it is not true that arbitrarily expressive contracting in general POMGs will give the optimal policy profile performance. If we ignore strategic conflicts between agents, one reason for this is coordination -- agents may have well-shaped incentives to act on if provided the full state, but may not be able to effectively act on them in a coordinated way, simply due to a lack of common observation. Since contracting can only mitigate incentive issues, we cannot hope to ameliorate this with contracting.

A model more suitable to this problem is a well-known restriction to POMGs---the Decentralized Partially Observable Markov Decision Problem (DEC-POMDP) \citet{bernstein2002complexity}. In DEC-POMDPs, all agents are given an identical scalar reward to maximize, but different observations. Does (sufficiently expressive) contracting make RL on POMGs behave like RL in DEC-POMDPs? 

To establish this connection, we need to make one more simplifying assumption on the considered POMGs -- that all agents' observation-action histories are an almost-surely deterministic function of the current true state. We call POMGs satisfying this assumption \textit{history-transparent} POMGs. Assuming this, it is possible to implement a forcing contract for the optimal DEC-POMDP policy under $\mathcal{C}_{\operatorname{full}}$ -- hence, by \Cref{thm:generalpomg}, all SPE of such a POMG augmented with $\bld \Theta_{\operatorname{full}}$ attain optimal reward under a DEC-POMDP with equivalent transition dynamics, where the shared reward is simply the social welfare. One can view this as a reduction of $\mathcal{C}_{\operatorname{full}}$-contracting in history-transparent POMGs to solving a DEC-POMDP at equilibrium. We leave this analysis for future work.

\subsection{Necessity of Arbitrary Unconditional Transfers}

As a last result in this section, we show that our restriction to contract spaces that admit arbitrary unconditional transfers was necessary. Welfare does not necessarily increase when moving to $\preceq_{\text{ETE}}$-greater contract spaces when the spaces involved do not admit arbitrary unconditional transfers.

\begin{example}[Bribing the Contract Proposer]
    Recall the Prisoner's Dilemma of \Cref{subfig:pd}, with the row player as the contract proposer. We can consider $\bTheta = \{\mathbf{0}, \btheta\}$ (recalling the null contract $\mathbf{0}$ having no effect on the original game), where $\btheta$ imposes the choice (C, C) for both agents with high penalty (say, -10) of defecting, with an additional unconditional transfer of 0.5 from the proposer to the column player, and $\bTheta^\prime = \{\mathbf{0}, \btheta, \btheta'\}$, where $\btheta'$ imposes a penalty of $-10$ on only the column player for defecting, and a transfer of $1.25$ for cooperating. Then, for $\bTheta$, we get the proposer has an incentive to select $\btheta$ over null contract $\mathbf{0}$, attaining reward $-1.5$ at equilibrium, but in the second space, the proposer has a stronger incentive to select $\btheta'$, where it gets a reward of $-1.25$ at equilibrium (in all cases, these contracts give reward greater than $-2$ for both proposing and accepting agents, and hence are incentive-compatible in proposal, acceptance, and in selection at the given equilibrium). Thus, even though $\bTheta'$ has more possible contracts to choose from, worse welfare are attained in the limit due to poor incentives for the proposing agent.
\end{example}

\section{Evaluation}\label{sec:methodology}
We now evaluate the performance of the contracting augmentation. First, we introduce the baseline methods that we use to evaluate our approach. Then, we introduce our experimental domains. Finally, we provide details on MOCA, our training procedure for contracting. Replication code can be found at \url{https://github.com/Algorithmic-Alignment-Lab/contracts}.

\subsection{Evaluation}
We evaluate MOCA by comparing to the following baselines.
\begin{tightlist}
\item Joint Training: a centralized algorithm with joint action space $\mathbf A = \times_{i=1}^N A_i$ chooses actions to maximize welfare;
\item Separate Training: Agents selfishly maximize their reward;
\item Gifting: Agents can \enquote{gift} \cite{lupu2020gifting} another agent at every timestep by directly transferring some of their reward;
\item Vanilla Contracting: Run an off-the-shelf deep RL on the contract-augmented versions of the respective domains.
\end{tightlist}
We train all domains with 2, 4, and 8 agents, using Proximal Policy Optimization (PPO) \cite{schulman2017proximal} with continuous state and action spaces with Gaussian sampling in ray rllib's \cite{liang2018rllib} implementation (hyperparameter choices can be found in \Cref{apx:hyperparameters}). In each domain, we train gifting agents with a lower bound of $0$ and an upper bound on transfer value in contracts. This allows the same magnitude of transfers in gifting and contracts, for fair comparison. In one of the games, Emergency Merge, we reduced the gifting values to $10$ per timestep, as this improved gifting's performance. On the Prisoner's Dilemma and the Public Goods game, we trained agents for 1M environment steps, and the complex dynamic games are trained for 10M environment steps.

\subsection{Games}
We test on several classes of games. We use Prisoner's Dilemma and a Public Goods game as static, simultaneous-move games, and Harvest, Cleanup, and Emergency Merge as dynamic domains.

\begin{description}
    \item[Prisoner's Dilemma.] The $n$-agent Prisoner's Dilemma has two actions per agent, cooperate and defect. If all agents cooperate, they each get reward $n$, and if all defect, they all get reward $1$. However, if some defect and some cooperate, the ones that cooperate get reward $0$ and the ones that defect get reward $n+1$. Again, the socially optimal outcome is the one where all agents cooperate, but only Nash equilibrium is where all agents defect. We run an additional timestep after the matrix game is played for gifting actions to take place.
    \item[Public Goods.] We study the following public goods game \cite{janssen2003adaptation}. Agents choose an \emph{investment} $a_i \in [0, 1]$, and get reward $R_i(\mathbf a) =  \frac{1.2}{N} \sum_{j=1}^N a_j - a_i$, i.e. they are given their share of the public returns, the investment returning 20\%, minus their own investment level. At social optimum, all agents choose $a_i=1$ to get optimal joint reward. However, selfish agents are not incentivized to invest at this high level, as they would like to free-ride on the other agents' efforts. 
    \item[Harvest.] In Harvest, from \citet{hughes2018inequity}, agents move along a square grid to consume apples, gaining a unit of reward. Apples grow faster if more apples are close by, which leads to incentives to overconsume now, leading to an intertemporal dilemma. We choose engineered features to limit the amount of computational resources needed. In particular, agents receive their position and orientation, the coordinates and orientation of the closest other agent, the position of the nearest apple, the number of apples close to the agent, the total number of apples, and the number of apples eaten by each agent in the last timestep. We don't allow agents to use a punishment beam following \citet{lupu2020gifting}. The environment runs for 1,000 timesteps per episode. 
    \item[Cleanup.] In Cleanup, also from \cite{hughes2018inequity}, agents similarly move along a square grid to consume apples and gain one unit of reward. Apples only spawn if a nearby river contains a number of waste objects lower than a threshold. Removing a waste object is a costless, but also rewardless, task. Apple-picking agents can free-ride on other agents, which leads to degraded performance. The observation space used for agents is simplified to limit computational requirements, and agents are passed their position and orientation, the position and orientation of the closest agent, the positions of the closest apple and waste object, and the number of current apples and waste objects. The environment runs for 1,000 timesteps per episode. 
    \item[Emergency Car.] A set of $n-1$ cars approaches a merge, an ambulance behind them, compare \Cref{fig:driving}. The ambulance incurs a penalty of $100$ per timestep that it has not reached the end of a road segment past the merge. The cars in front also want to get to the end of the road segment, but incur a penalty of only $1$ per timestep. They are limited to one-fourth of the velocity that the ambulance can go. We assume access to controllers preventing cars from colliding (stopping cars short of crashing into another car) and managing merging, and so the actions $a_i \in [-0.1, 0.1]$ only control the forward acceleration of each vehicle. A dilemma arises as cars prefer to drive to the merge fast, not internalizing the strong negative effect this has on the ambulance. The environment resets after 200 rounds or when cars crash, whichever is earlier. Note that here, due to the asymmetry of agent capabilities and rewards, attaining optimal social welfare cannot be done via Pareto improvement within the original game.
\end{description}

\begin{figure}
    \centering
    \includegraphics[width=.6\linewidth]{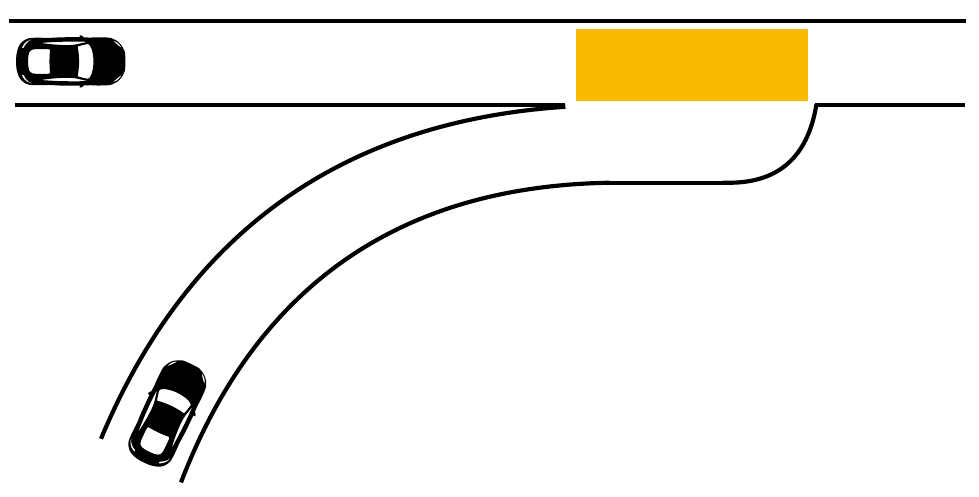}
    \caption{A depiction of the emergency merge domain.}
    \label{fig:driving}
\end{figure}

\subsection{Contract Spaces}
We consider low-dimensional contract spaces for different domains. 
\begin{description}
\item[Prisoner's Dilemma] Contracts are parameterized by a transfer $\theta \in [0, n]$ for defecting, which is distributed to the other agents in equal proportions.
\item[Public Goods] Contracts are parameterized by a transfer $\theta \in [0,1.2]$, agents transfer $\theta (1-a_i)$, which is distributed to the other agents in equal proportions.
\item[Harvest] Contracts are parameterized by $\theta \in [0, 10]$. When an agent takes a consumption action of an apple in a low-density region, defined as an apple having less than 4 neighboring apples within a radius of $5$, they transfer $\theta$ to the other agents, which is equally distributed to the other agents.
\item[Cleanup] Contracts are parameterized by $\theta \in [0, 0.2]$, which correspond to a payment per garbage piece cleaned, paid for evenly by the other agents.
\item[Emergency Car] The ambulance can propose a per-unit subsidy of $\theta \in [0, 100]$ to the cars at the time of ambulance crossing. Each car is transferred $\theta$ times its distance behind the ambulance at time of merge by the ambulance. If a car is ahead of the ambulance at time of reward, it pays the ambulance $\theta$ times its distance ahead of the ambulance. 
\end{description}

\subsection{Training}

The contracting augmentation yields a Markov game, for which one could directly train agents with deep reinforcement learning (we will call this \emph{vanilla contracting}). However, as can be observed from \Cref{fig:linesmatrix} and \Cref{fig:matrix}, this implementation of contracting does not outperform joint training in problems with more complex dynamics, or higher-dimensional state and action spaces. To fix this, we propose an algorithm inspired by multi-objective reinforcement learning, compare \cite{andrychowicz_hindsight_2017}, \emph{Multi-Objective Contract Augmentation Learning} (MOCA). We present it in \Cref{alg:pseudocode}. MOCA consists of two phases: first, the algorithm draws random contracts (which, in the language of multi-objective reinforcement learning, can be viewed as different \enquote{objectives}). This can be used to estimate $V^{\boldsymbol \pi}_i (s_0, \boldsymbol\theta)$, $i=1, 2, \dots, n$ for the initial state $s_0$ and any contract $\boldsymbol\theta$, i.e. the values for agents when contract $\boldsymbol\theta \in \boldsymbol\Theta$ is in force. This allows it to learn  estimates of the utility agents will get under a particular contract. Due to random sampling, these estimates are not biased by contract exploration, which may be an issue when using deep reinforcement learning directly.

In a second phase, we freeze play following $(s_0, \boldsymbol\theta)$ for any contract $\boldsymbol\theta$ and the policy at states $(i, \boldsymbol 0)$ and $(i, \boldsymbol\theta)$. We do so by choosing a contract repeatedly from the policy $\pi_i(i, \boldsymbol 0)$, and use as a proxy for acceptance the expected probability of acceptance, $\prod_{j=1}^n \pi_j(i, \boldsymbol\theta)$. In order to help exploration of the contract space in this stage, we sample $\boldsymbol \nu$ agents from the space of non-proposing agents, and only use these agent's accept-reject probabilities in determining contract acceptance. Here, the introduced $\boldsymbol \nu$ becomes a tunable hyperparameter, for which $\boldsymbol \nu = 2$ obtained strong performance across all domains, which we report in \Cref{sec:experiments}. We update the weights for the actions of all agents at $\pi_i(i, \boldsymbol 0)$ and $\pi_j (i, \boldsymbol\theta)$, for $j=1, 2, \dots, n$. Finally, the algorithm returns the so-obtained policy profile. 

\begin{algorithm}
\caption{Multi-Objective Contract-Augmentation Learning (MOCA)}\label{alg:pseudocode}
\KwData{Contract Space $\boldsymbol\Theta$ including the null contract $\boldsymbol 0$, Markov Game $M$, probability distribution $P(\boldsymbol\Theta)$}
\KwResult{Policy Profile $\boldsymbol\pi$}
$\boldsymbol\pi\leftarrow \operatorname{initialize\_policies}()$\;
\For{$t=1$ \KwTo $\frac{9}{10}\operatorname{num\_episodes}$}{
$\boldsymbol\theta \sim P(\boldsymbol\Theta)$\;
$\operatorname{train\_subgame\_episode}(\boldsymbol\pi (s_0, \boldsymbol\theta))$
}
Freeze $\pi|_{S \times \Theta}$\;
\For{$i=1$ \KwTo $\frac{1}{10}\operatorname{num\_episodes}$}{
$\boldsymbol\theta \sim \pi_i (i, \boldsymbol 0)$\;
\lIf{$\operatorname{rand()} < \prod_{j=1}^n \pi_j(i, \boldsymbol\theta)$}{$\textbf{contract} \leftarrow \boldsymbol\theta$}\lElse{$\textbf{contract} \leftarrow \boldsymbol 0$}
$\mathbf R \leftarrow \operatorname{sample\_episode\_reward}(\boldsymbol\pi,\textbf{contract})$\;
$\operatorname{train\_with\_rewards}(\boldsymbol\pi, \mathbf R)$\;
}
\Return{$\boldsymbol\pi$}\;
\end{algorithm}
We evaluate the performance of the final trained algorithm on rollouts. The choice of length of the two periods (e.g. the $\frac{9}{10}$th for the first phase) is arbitrary.

\subsection{Monotonicity}
Finally, in addition to assessing the performance of contracting (vanilla and MOCA) on these domains, we also wish to empirically validate the monotonicity analysis for policies shown in \Cref{sec:features}. To study this, we considered Cleanup with 4 agents, and explored whether or not \Cref{thm:randomfeat} held in a complex, dynamic game. Recall that our choice of contract for this domain depends on agents being compensated for cleaning a river. To study feature monotonicity, we take the same underlying contracting logic, but introducing a probability $\alpha \in \{0,0.25,0.5, 0.75, 1\}$ that a given cleaning action is unobserved. This is equivalent to keeping $\Theta$ constant, but altering $O_0$, in the contracting model we used earlier for the Cleanup domain. Notice that $\alpha = 0$ represents is equivalent to the contracting experimented with earlier, and $\alpha=1$ is equivalent to the separate training baseline. Our theory would suggest that, because the 1-0 random features for detection of contracts form a Markov chain when $\alpha_1 > \alpha_2$, we ought to have decreasing welfare as $\alpha$ increases. 

\section{Results}\label{sec:experiments}
We first present a sample of our experiments with our baselines, which motivate the need for MOCA (\Cref{alg:pseudocode}), in \Cref{fig:linesmatrix}. Then, we discuss overall trends from all conducted experiments, \Cref{fig:matrix}. Finally, we discuss results analysing the performance of contracting under randomized features, as compared to predictions from theory, as seen in \Cref{fig:monotonic}.

\subsection{Contracting Using Deep Reinforcement Learning in the Contracting Augmentation} Consider first \Cref{fig:linesmatrix}. We observe that, in Prisoner's Dilemma and Cleanup, the baseline implementation of contracting is sufficient to achieve optimal or near-optimal performance, as can be seen by contracting either matching or surpassing the social welfare of training all agents jointly, and vastly surpassing the welfare of both gifting and separate training (both of which converge to socially suboptimal Nash equilibrium welfare). However, in more complex domains, such as Cleanup, this ceases to be the case. One potential reason for this is that, in these domains, learning the best responses to contracts becomes much more challenging, and so estimates of value for given contracts are less reliable early in training. Therefore, the proposing agent may benefit from additional exploration of the space of contracts, the main feature of MOCA. As seen in \Cref{fig:matrix}, MOCA again attains higher social welfare than joint training, separate training, and gifting. However, since intermediate levels of reward are not directly comparable with the baselines (since contracts are randomly sampled in the first stage of training, and are not run for the same number of timesteps in the second stage), MOCA is omitted from \Cref{fig:linesmatrix}. For this, results are presented in \Cref{fig:matrix} with bar plots summarizing welfare at the end of training, for all evaluated methods.

\subsection{Multi-Objective Contracting Augmentation} Now, we take a closer look at the full results in \Cref{fig:matrix}. In the simpler domains (left two columns), MOCA, like vanilla contracting, attained social welfare is vastly higher than for separately trained agents and agents trained with gifting. In Prisoner's Dilemma, contracting reaches joint optimality for 2, 4, and 8 agents. A smaller action space (and hence easier exploration) is a potential reason for why contracting can perform \emph{even better} than joint training, since the action space for joint training grows exponentially in the number of agents. In Public Goods, especially for higher number of agents, joint training interestingly outperforms MOCA, but not vanilla contracting. One possible reason for this is that, uniquely in our suite of environments, learning best responses to each contract is challenging, while the socially optimal policy is itself trivial to execute. Therefore, early in training, it is likely that socially optimal play is learned as a response to some of the contracts, particularly for those $\theta$ which are near-optimal. Therefore, biasing contract exploration early on is good for performance. In complex environments, since the socially optimal contracts are harder to execute, early biasing of contract exploration is unlikely to be well-informed, and so converging onto a poor contract proposal algorithm is likely in vanilla contracting at scale.

\begin{figure*}
\centering
\begin{tabular}{lccccc}
\toprule
\# agents & Pris. Dilemma & Public Goods & Harvest & Cleanup & Merge \\
\midrule
\includegraphics[width=.15\linewidth]{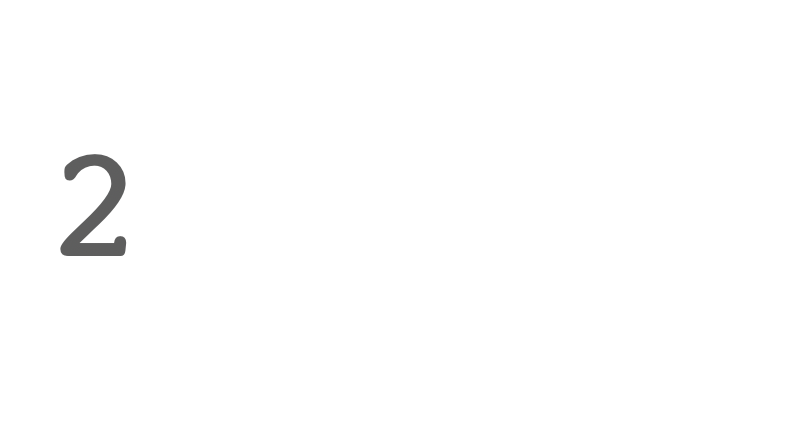}
 &\includegraphics[width=.15\linewidth]{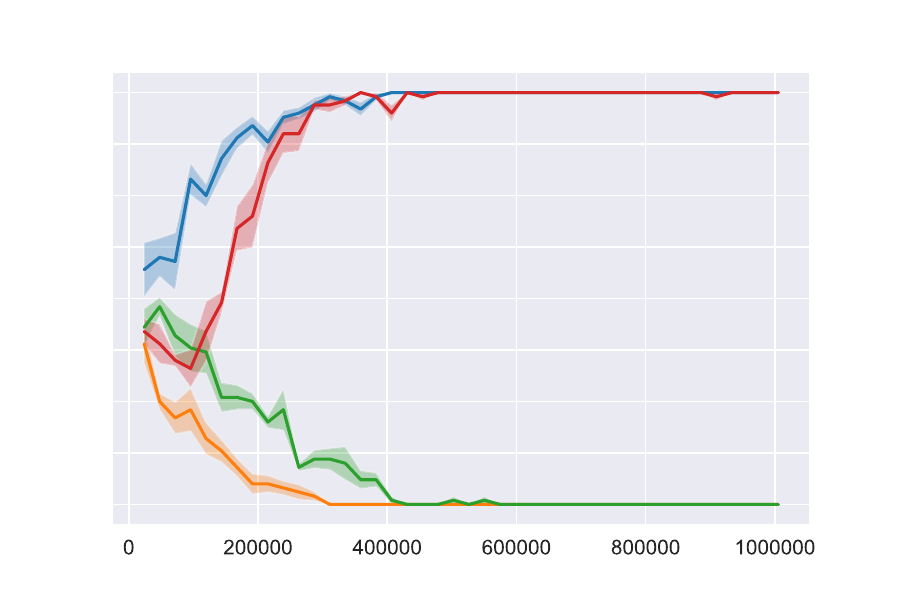} & \includegraphics[width=.15\linewidth]{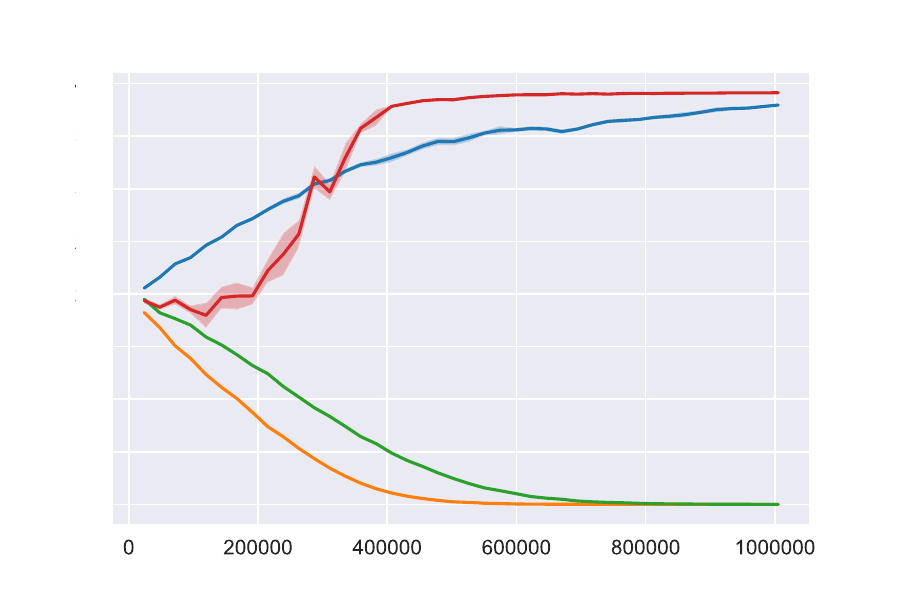}
&\includegraphics[width=.15\linewidth]{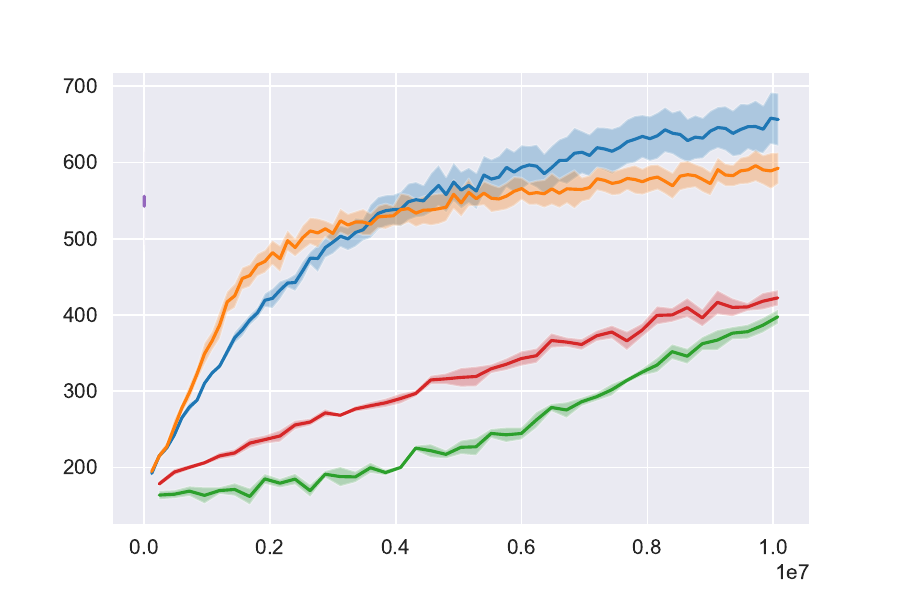}  
&\includegraphics[width=.15\linewidth]{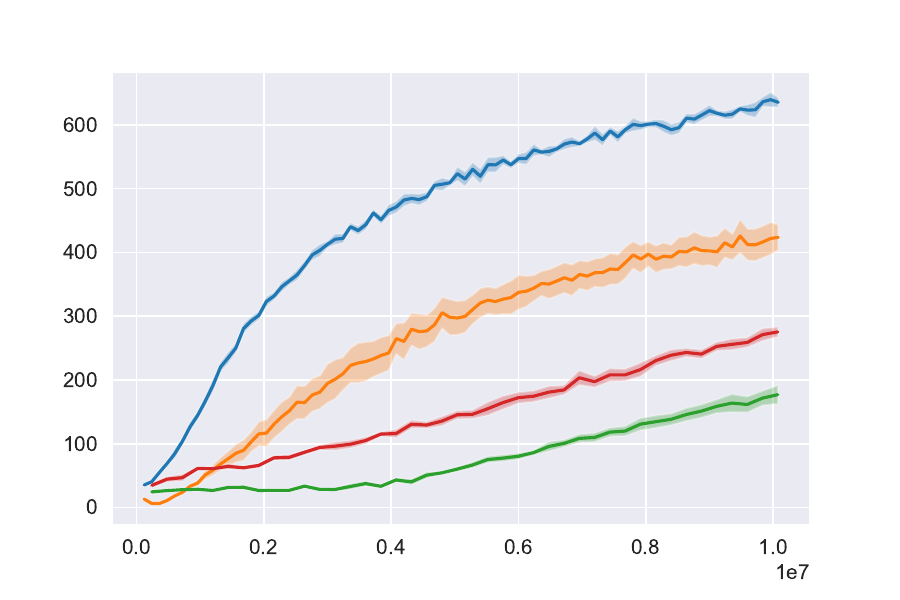}  
&\includegraphics[width=.15\linewidth]{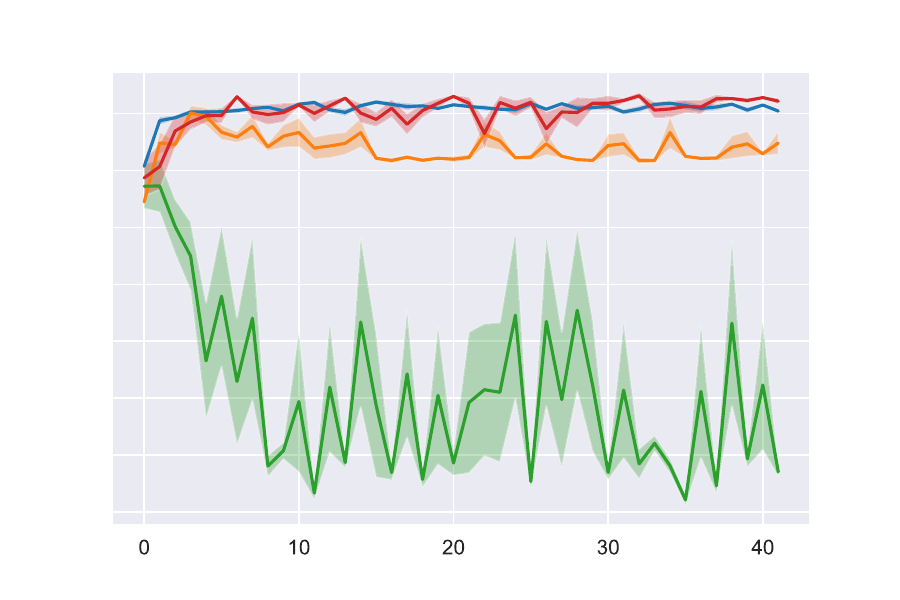}  \\
\includegraphics[width=.15\linewidth]{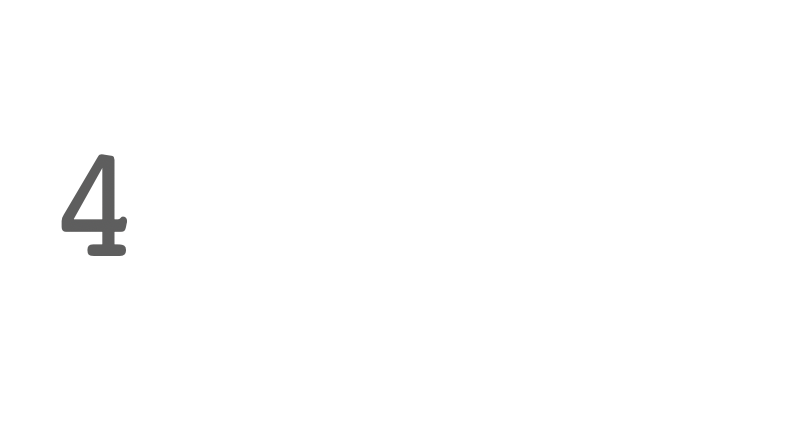}
 &\includegraphics[width=.15\linewidth]{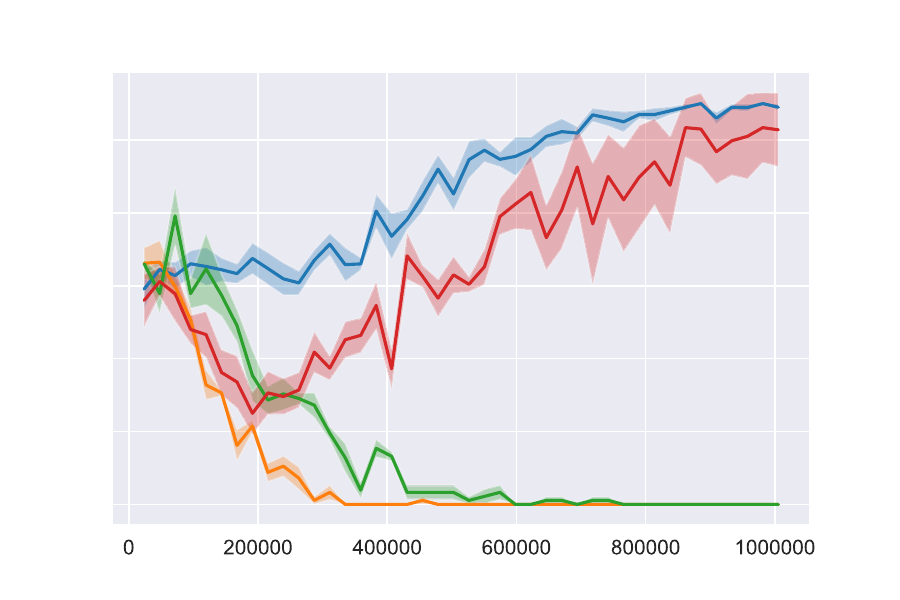} & \includegraphics[width=.15\linewidth]{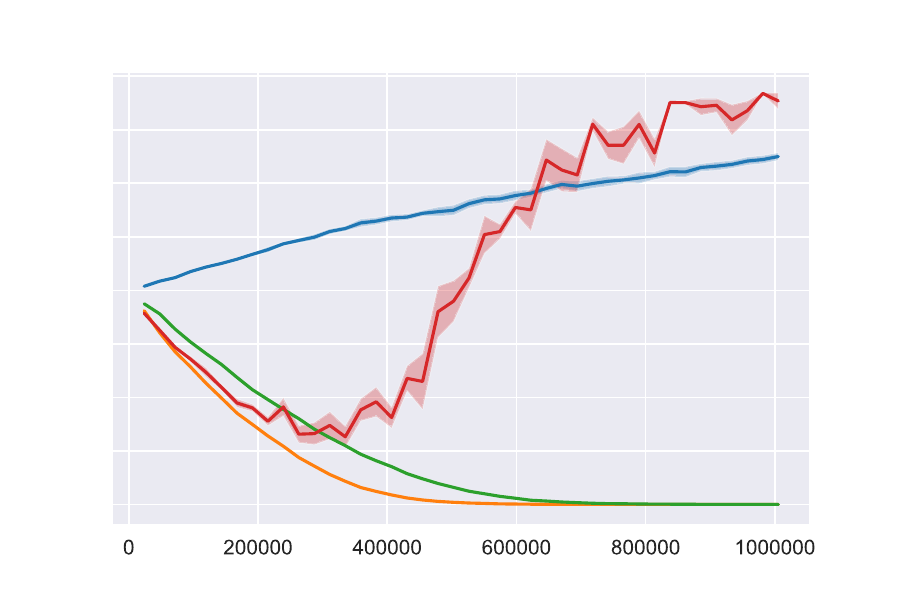}
&\includegraphics[width=.15\linewidth]{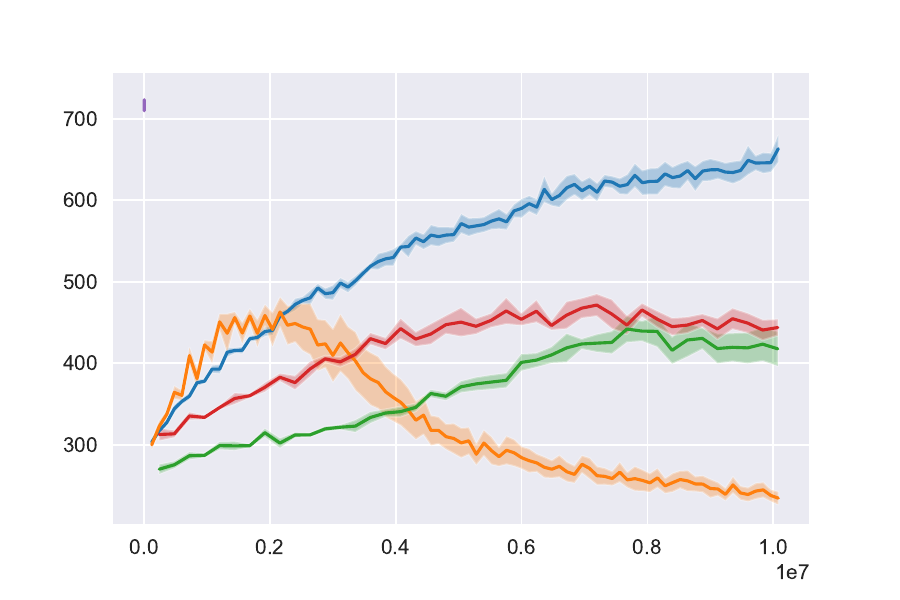}  
&\includegraphics[width=.15\linewidth]{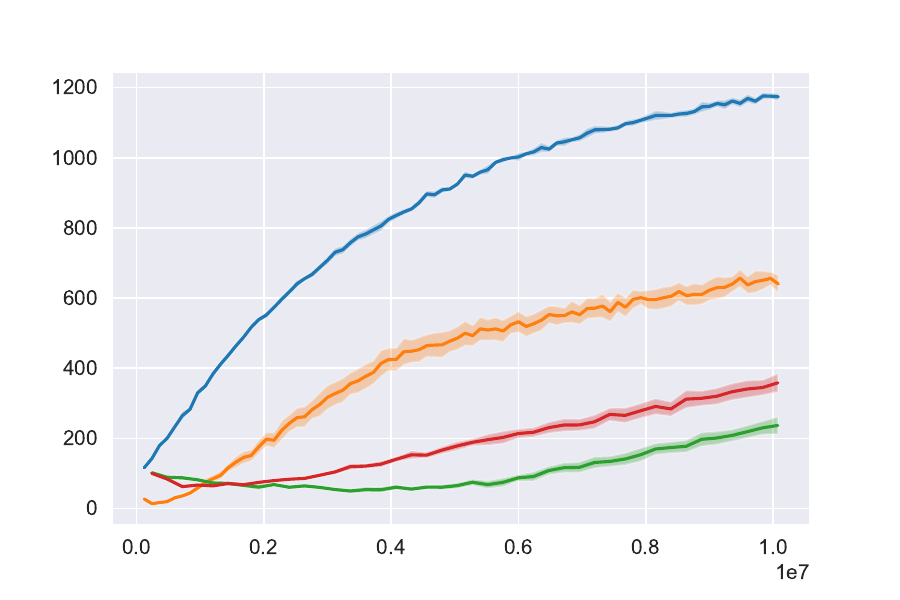}  
&\includegraphics[width=.15\linewidth]{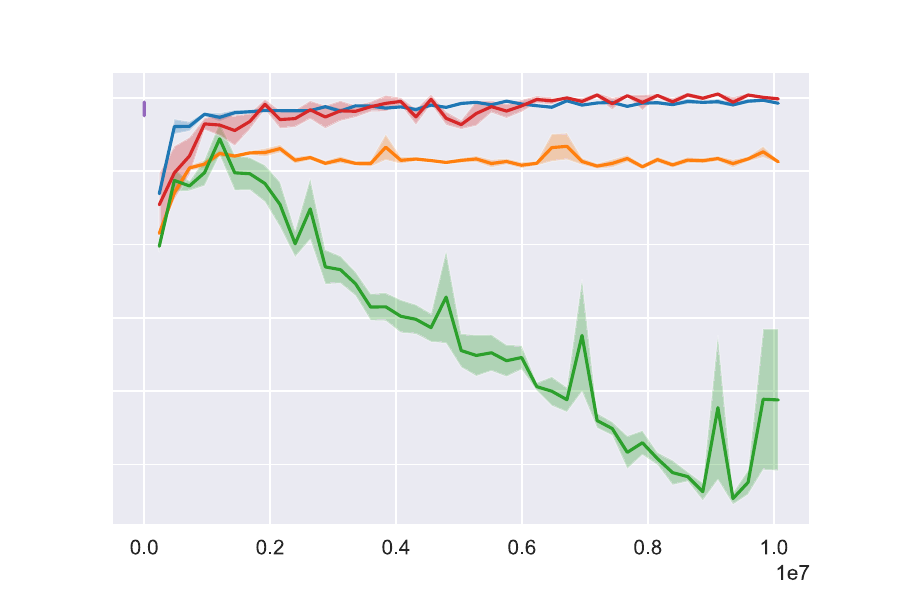}  \\
 \includegraphics[width=.15\linewidth]{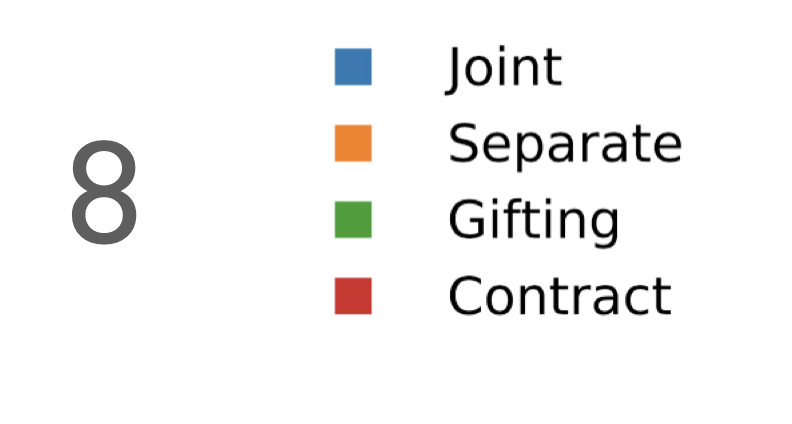}
 &\includegraphics[width=.15\linewidth]{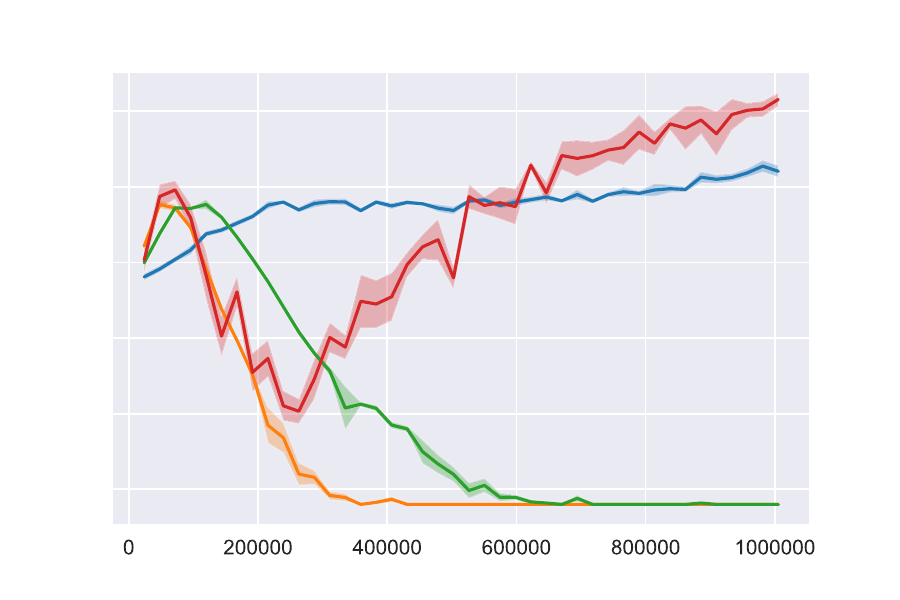} & \includegraphics[width=.15\linewidth]{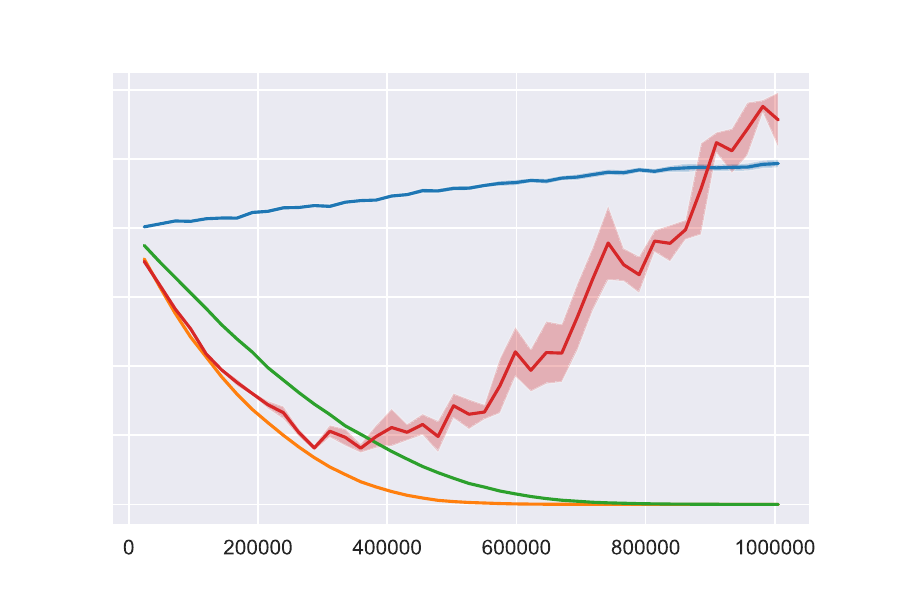}
&\includegraphics[width=.15\linewidth]{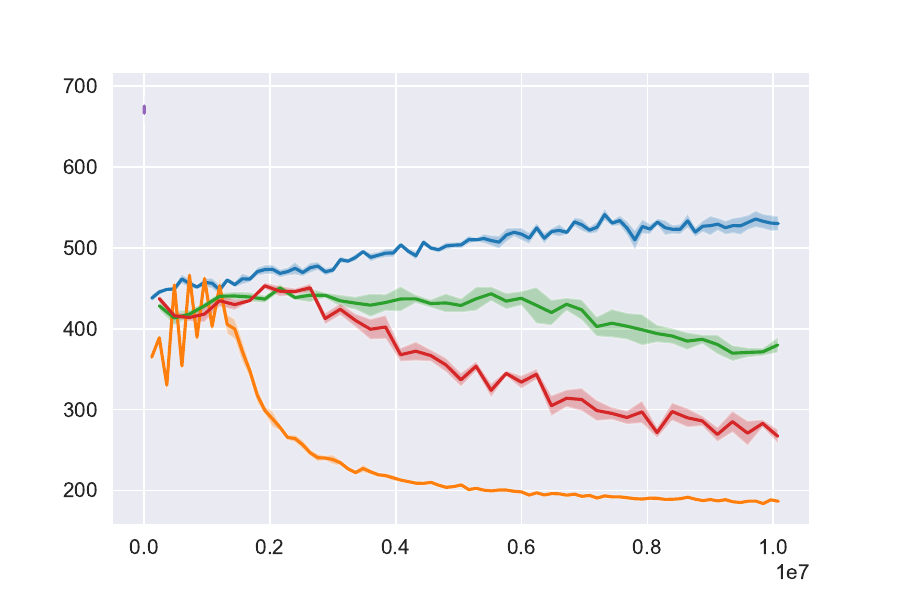}  
&\includegraphics[width=.15\linewidth]{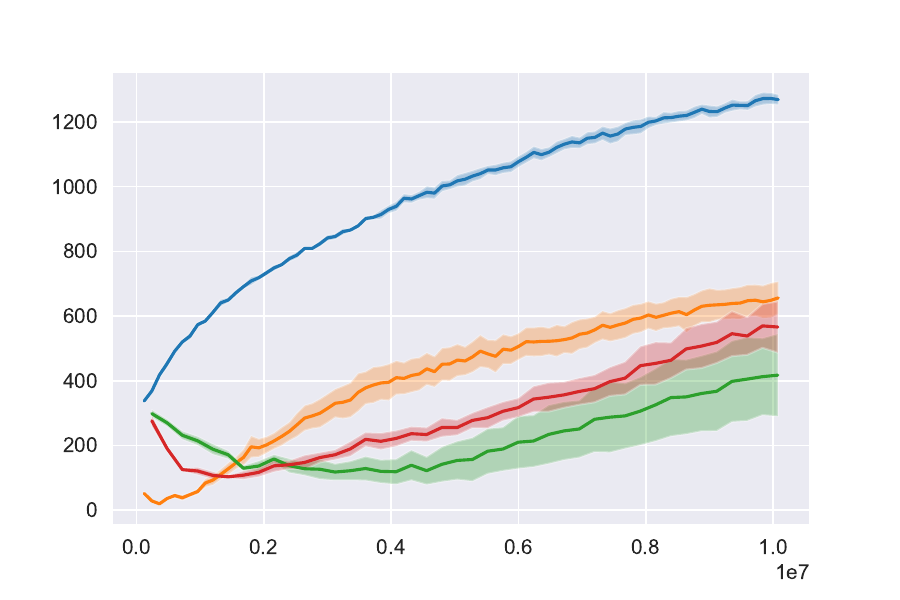}  
&\includegraphics[width=.15\linewidth]{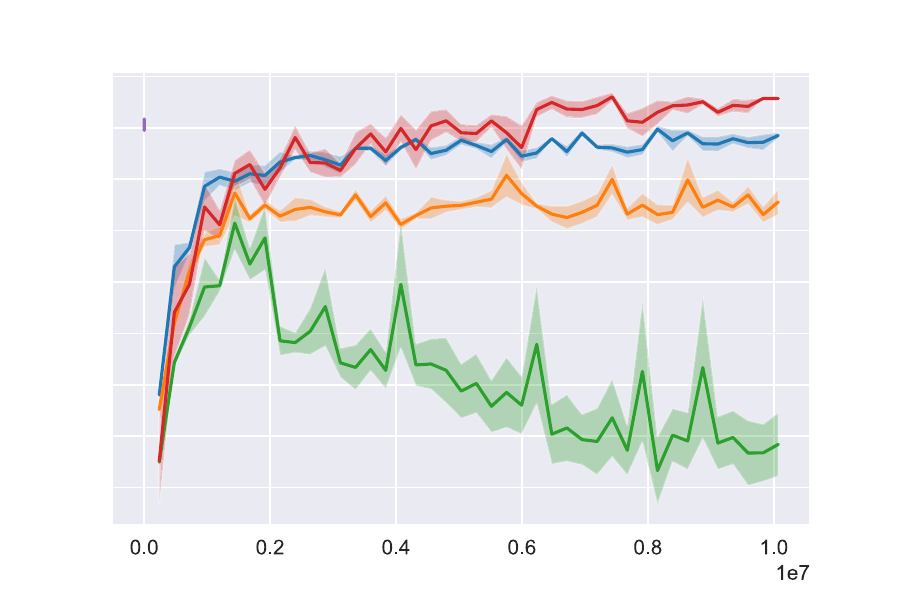}  \\
  \bottomrule
\end{tabular}
\caption{Welfare throughout training the benchmark algorithms (including a \emph{vanilla} implementation of contracting using off-the-shelf deep reinforcement learning algorithms). In simple static domains, contracting achieves welfare that is close to joint optimality, but more complex domains (i.e. Harvest and Cleanup), biased policy exploration due to the difficulty of learning best responses has a greater effect. Therefore, vanilla contracting suffers in performance. For each figure, the $x$-axis plots number of environment steps (e.g. all agents taking an action is one step), and error is one standard deviation over 5 independent runs.} 
\label{fig:linesmatrix}
\end{figure*}

\begin{figure*}[!ht]
\begin{tabular}{lcccccc}
\toprule
\# agents &Pris. Dilemma & Public Goods&Harvest &Cleanup &Merge \\
\midrule
\includegraphics[width=.14\linewidth]{figs/two.png}
 &\includegraphics[width=.14\linewidth]{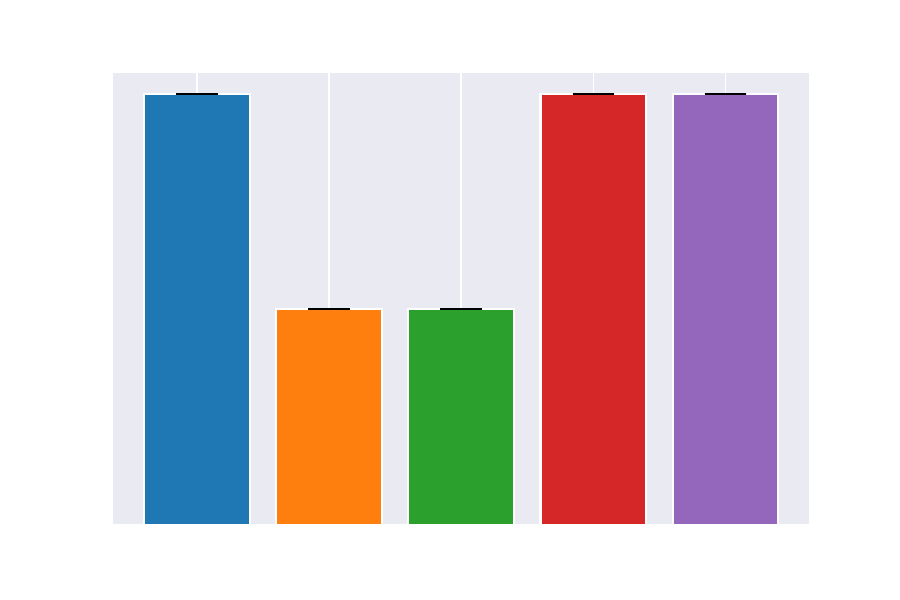} & \includegraphics[width=.14\linewidth]{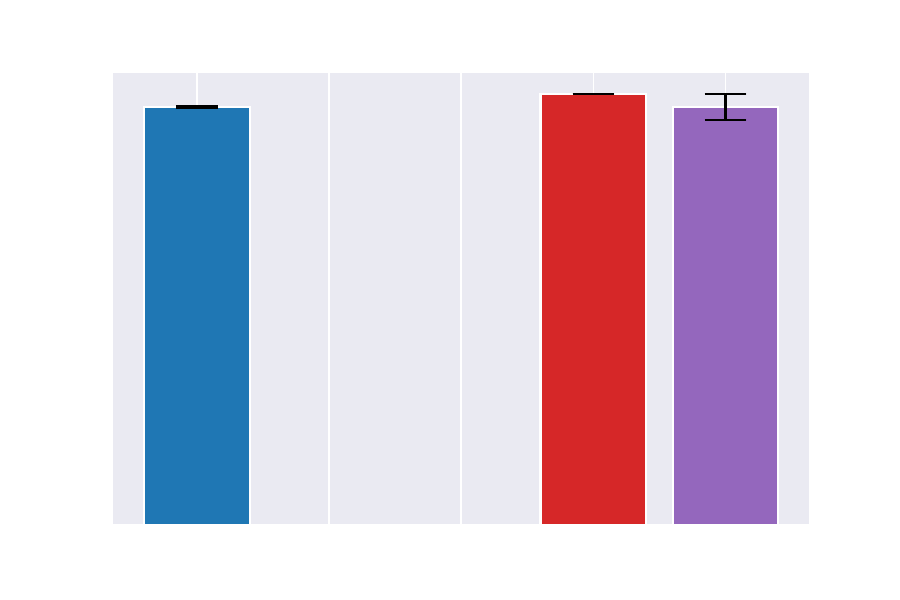}
&\includegraphics[width=.14\linewidth]{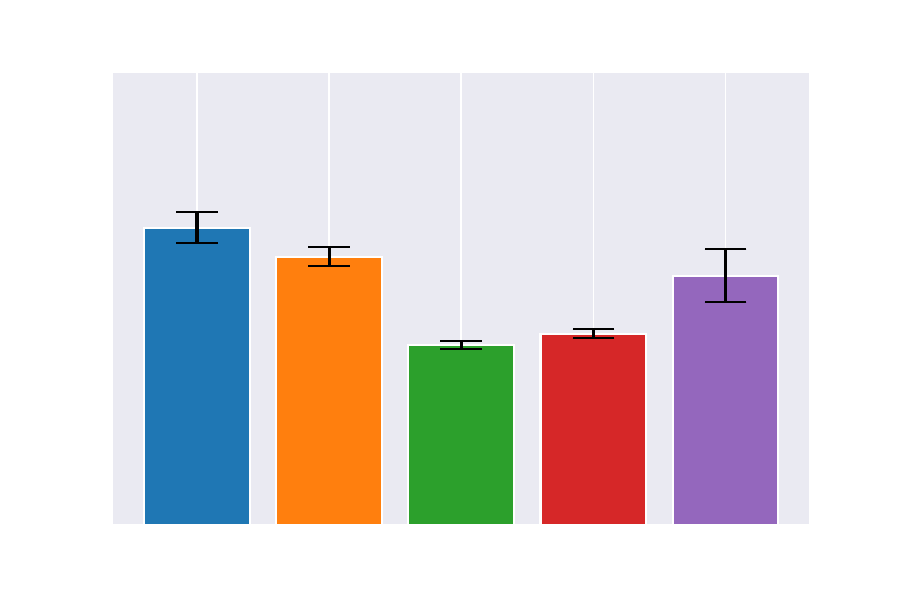}
&\includegraphics[width=.14\linewidth]{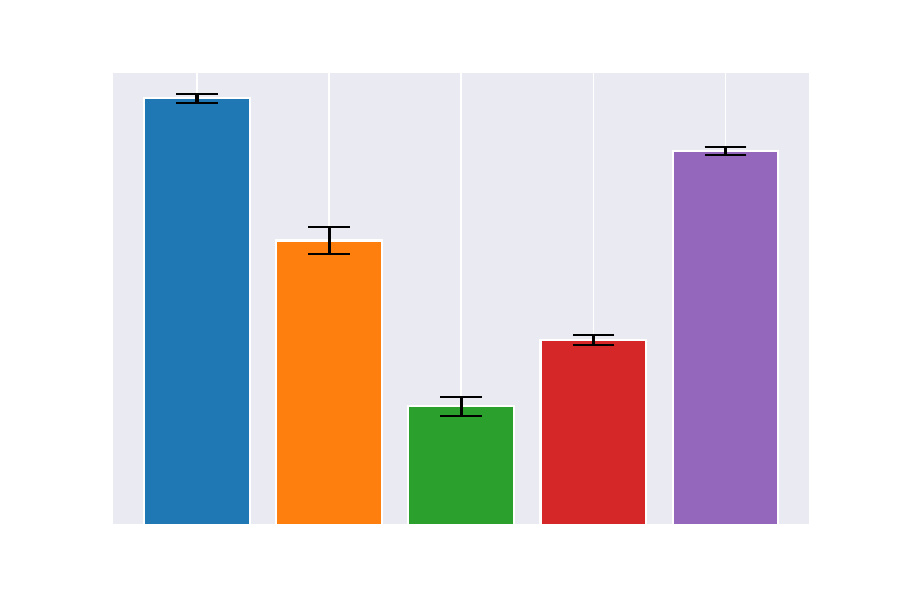} 
&\includegraphics[width=.14\linewidth]{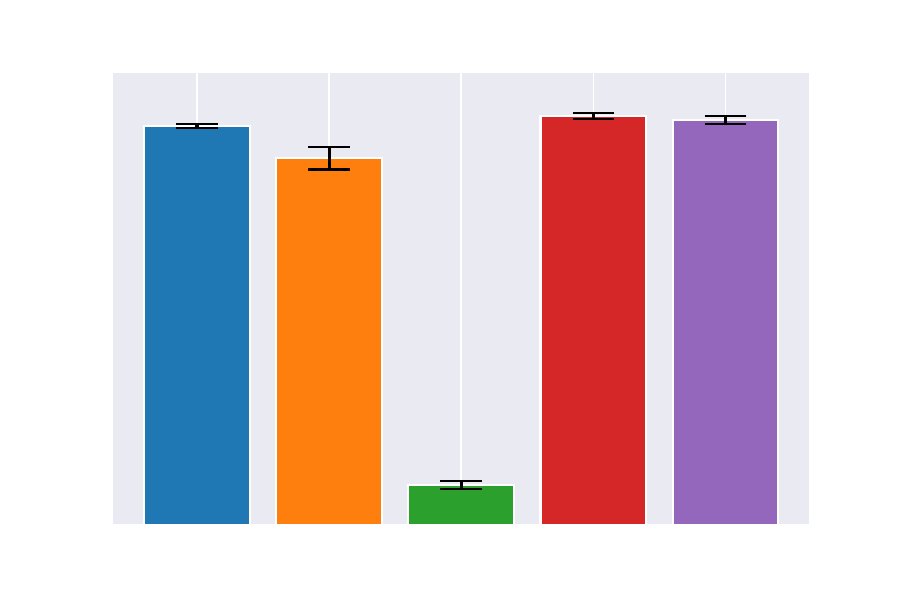} \\
\includegraphics[width=.14\linewidth]{figs/four.png} &\includegraphics[width=.14\linewidth]{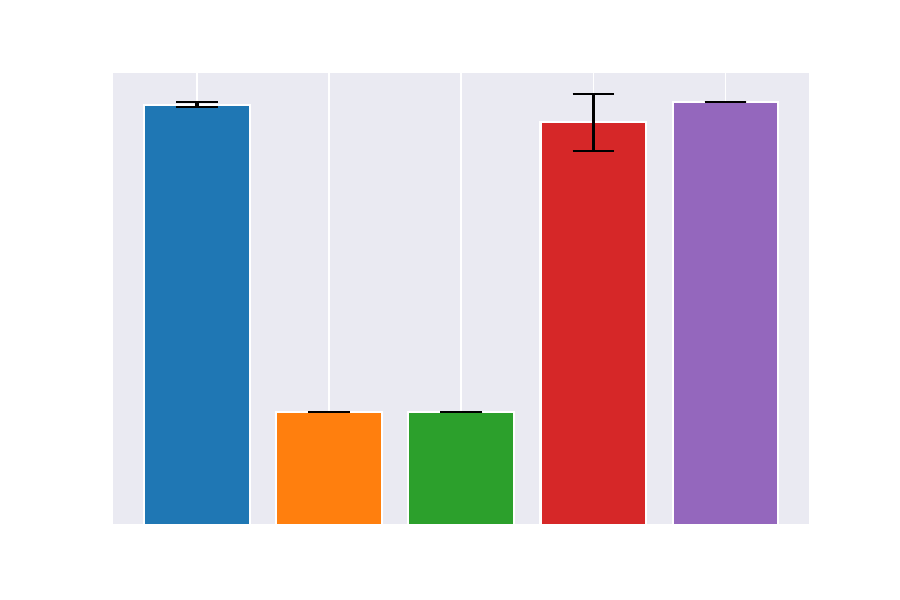} &\includegraphics[width=.14\linewidth]{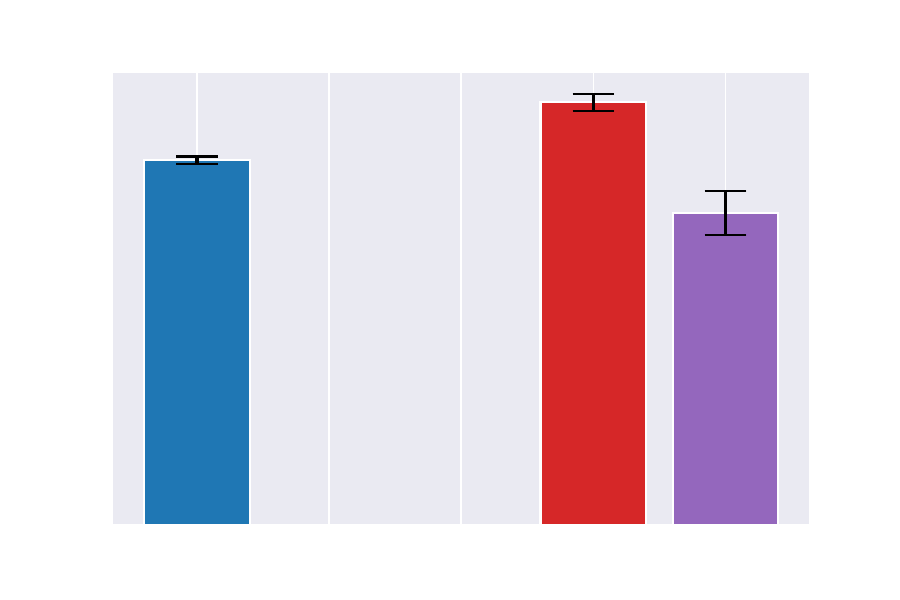}
&\includegraphics[width=.14\linewidth]{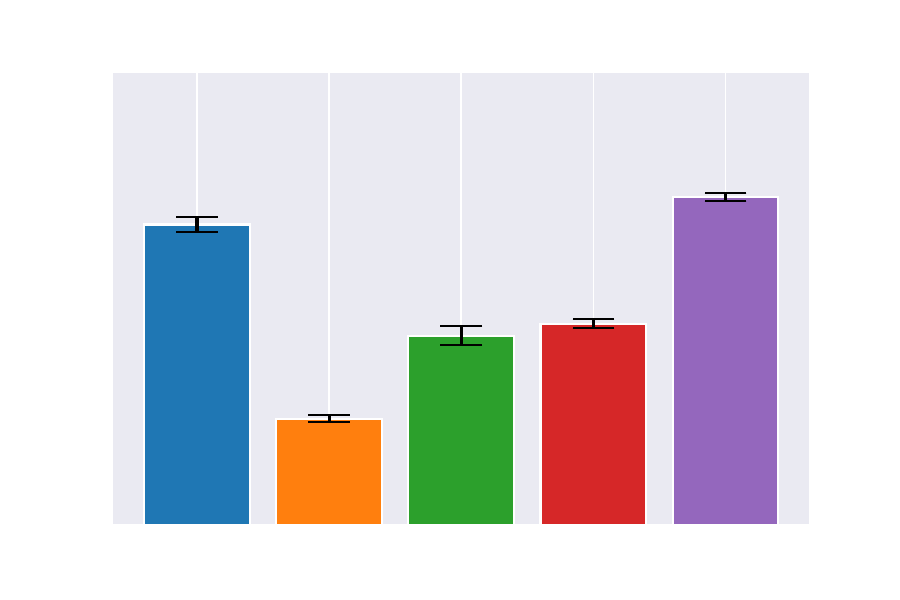}
&\includegraphics[width=.14\linewidth]{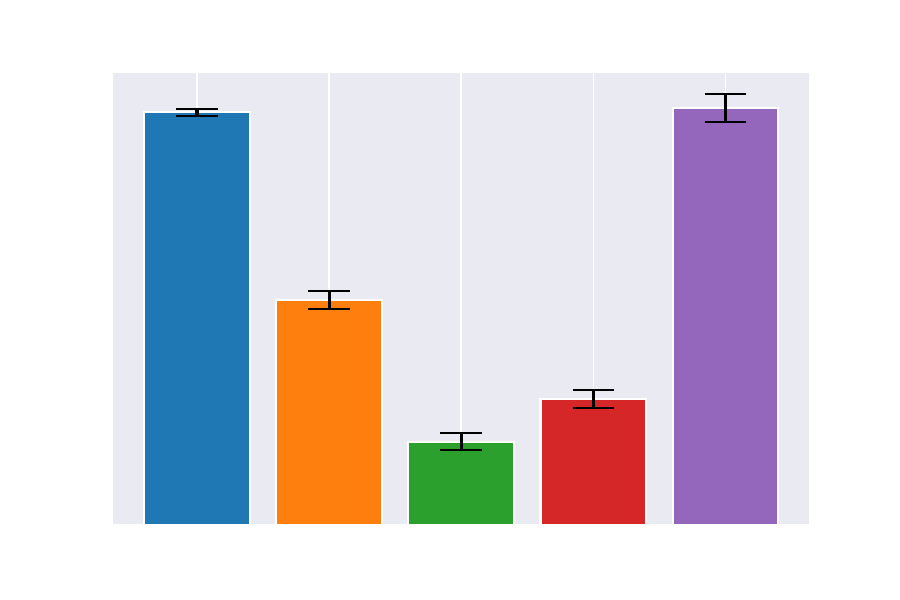}
&\includegraphics[width=.14\linewidth]{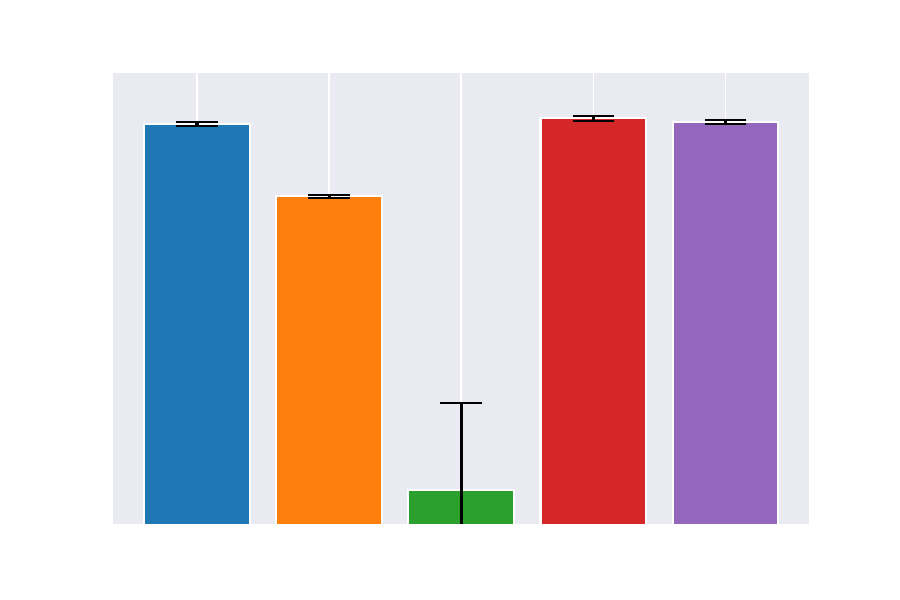} \\
 \includegraphics[width=.14\linewidth]{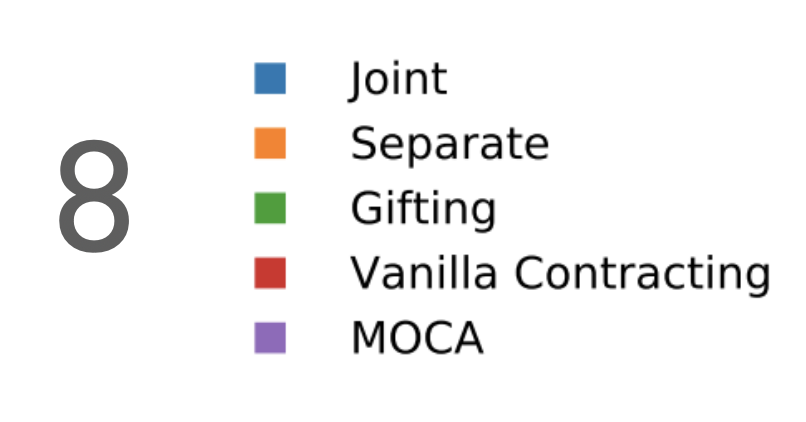}
 &\includegraphics[width=.14\linewidth]{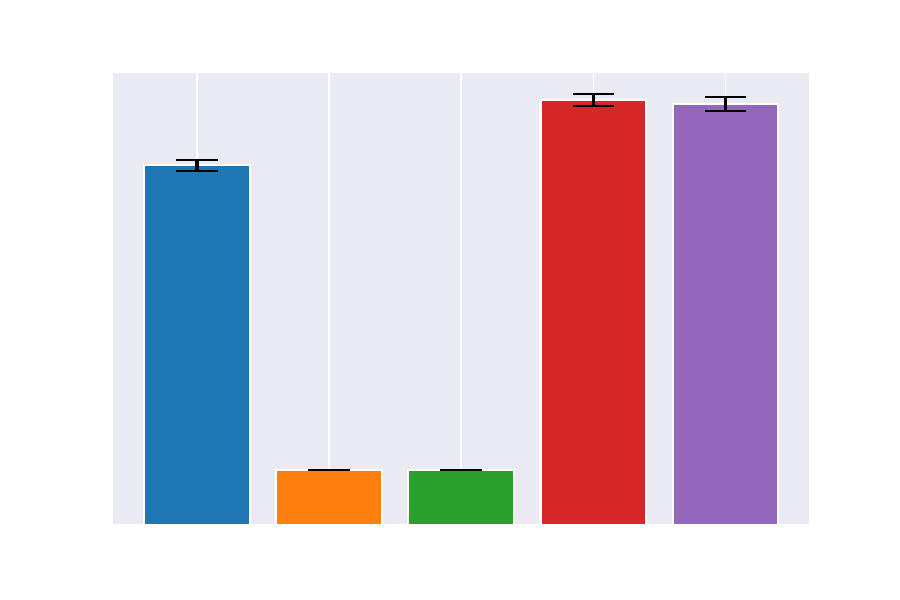} &\includegraphics[width=.14\linewidth]{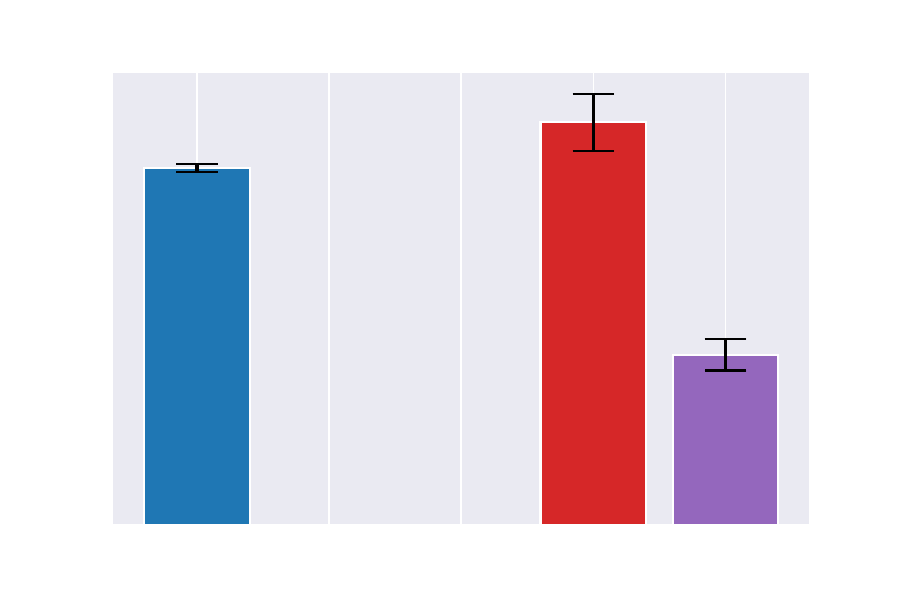}
  &\includegraphics[width=.14\linewidth]{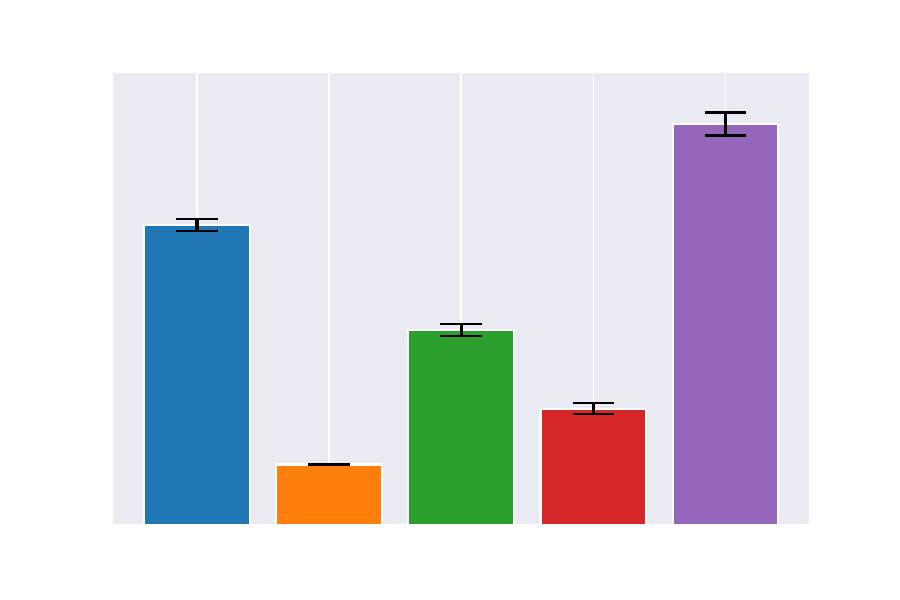}
  &\includegraphics[width=.14\linewidth]{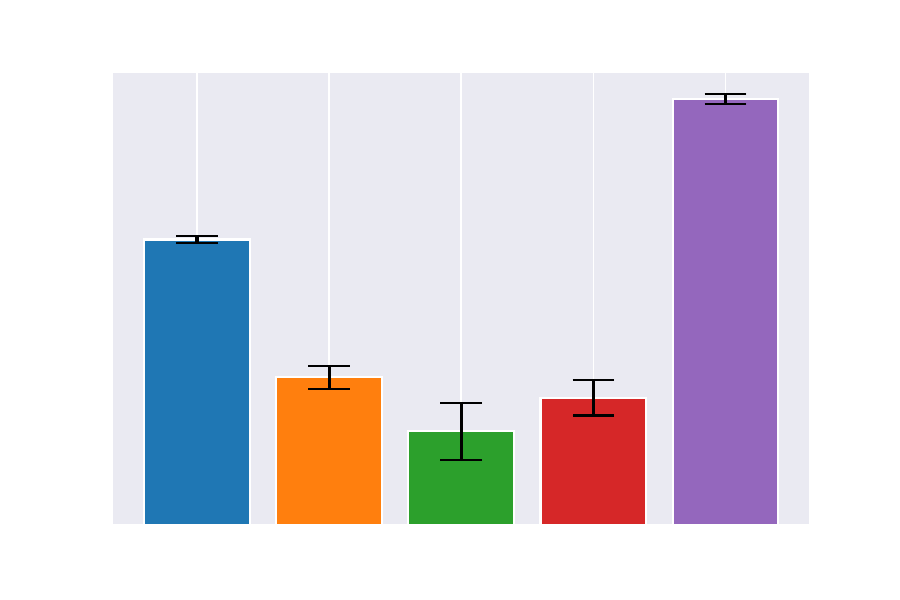}
  &\includegraphics[width=.14\linewidth]{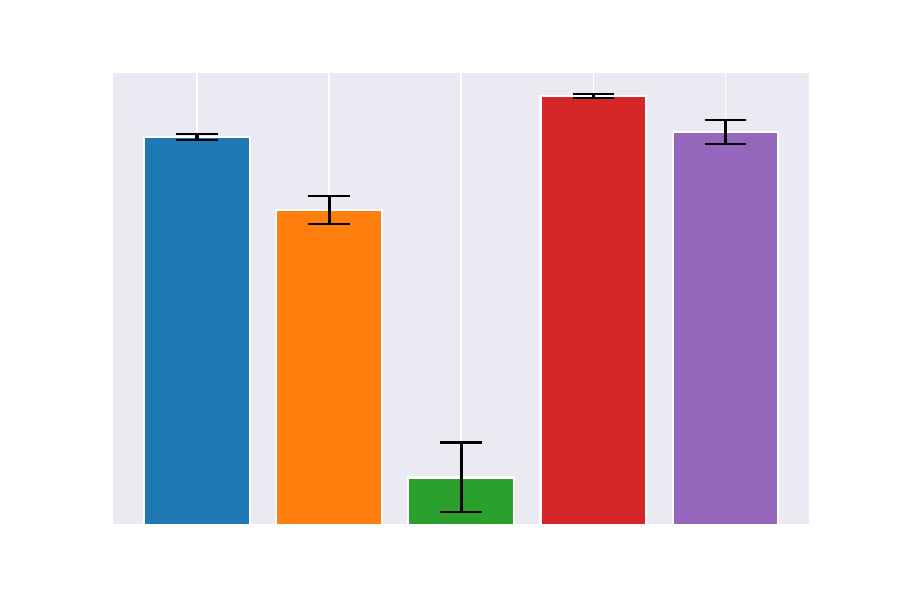} \\
  \bottomrule
\end{tabular}
\caption{Experimental results including MOCA. Every plot is the mean social reward per episode at the end of training (1M plays of the static domains, 5M timesteps for the dynamic domains) for each of the 5 algorithmic setups tested. Cells vary across number of agents present (2, 4, 8), environments (Prisoner's Dilemma, Public Goods, Harvest, Cleanup, Emergency Merge) with each cell comparing different algorithms (joint, gifting, separate, vanilla contracting, MOCA). Error bars denote one standard deviation over five independent runs. For simpler games in the left two columns, MOCA attains higher social reward than both gifting and separate training. However, it fails to match joint training in Public Goods, since this is a domain with simple environment dynamics where learning to respond to contracts is difficult, so early best responses are more likely to be informative, and biasing towards these early on is likely to help performance. For all of the more complex domains in the right 3 columns, contracting leads to higher social reward than gifting and separate training, and always at least matches that of joint training, except for 2-agent Harvest and Cleanup, where sufficient resources are available to make these very mild social dilemmas, as the higher welfare resulting from separate training compared to joint training shows. In Emergency Merge, both vanilla contracting and MOCA contracting significantly outperforms separate training. In several domains, contracting outperforms joint training, which is a result of the large action space of the joint problem.}
\label{fig:matrix}
\end{figure*}

In the more complex domains (right three columns of \Cref{fig:matrix}), MOCA in almost all cases attains at least the level of social welfare as joint training, and often far exceeds it. The exceptions to this trend are 2-agent Harvest and 2-agent Cleanup---in both cases, joint training and separate training exhibit strong performance (the former performing comparably to MOCA in both cases, the latter only for Harvest). The reason for this is that the two-agent settings for this problem are not strong social dilemmas, since the grid is wide enough that agents will not directly interact. Notably, this trend even applies in cases where the vanilla contracting fails (particularly in Harvest and Cleanup), motivating MOCA. In the merge domain, MOCA, contracting with a standard PPO training, and joint training, all perform similarly, given the fact that this is a substantially lower-dimensional, and simpler in terms of best-response policies, than Harvest or Cleanup.

\subsection{Experiments on Monotonicity} 
From our theory, we would predict that the average welfare attained by contracting, if agents have learned to enact subgame perfect equilibrium in the contracting game, should decrease as we inject noise into the contracting process. Indeed, we find that this happens, see \Cref{subfig:monotonic_reward}. Specifically, we find that, in cleanup with 4 agents, as we increase the probability that agents cleaning the river go unnoticed in the contract, the lower that social welfare becomes when learned with MOCA. However, we note that the monotonicity we see in this dynamic setup is not uniform: we see a significant drop when adding any noise at all, then a relative plateau for middle values of $\alpha$, and a sharp drop when going from $\alpha=0.75$ to $\alpha=1.0$. However, our theory doesn't say anything about which contracts are actually selected in each case, and as we see in \Cref{subfig:monotonic_params}, no strong pattern emerges between  $\alpha$ and contract parameter. Moreover, there seems to be a strong amount of variance regarding the precise selected contract at all levels. 

\begin{figure}[t!]
\begin{subfigure}[b]{0.45\linewidth}
    \centering
    \includegraphics[width=\linewidth]{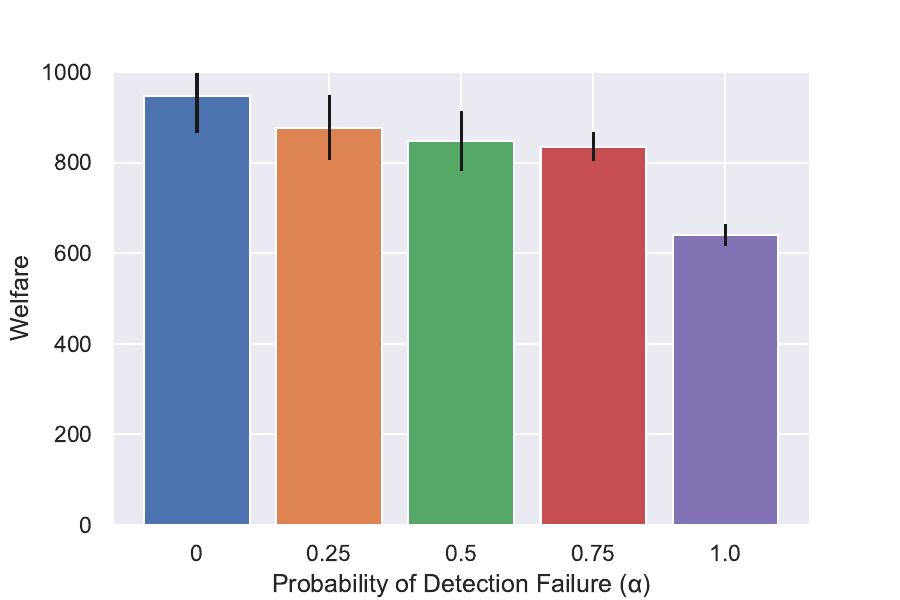}
    \caption{Attained welfare}
    \label{subfig:monotonic_reward}
\end{subfigure}
\begin{subfigure}[b]{0.45\linewidth}
    \centering
    \includegraphics[width=\linewidth]{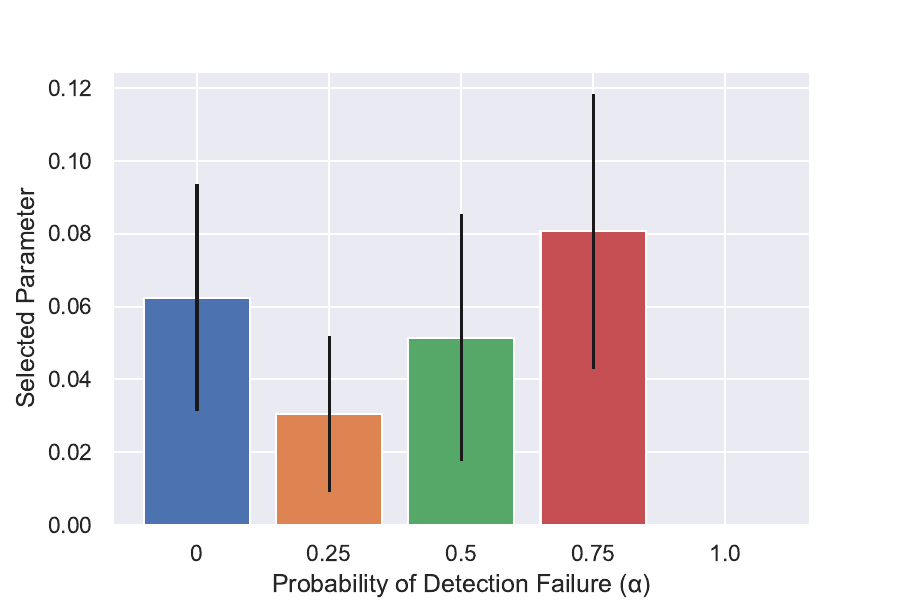}
    \caption{Negotiated contract parameters}
    \label{subfig:monotonic_params}
\end{subfigure}
\caption{Experimental validation of contract feature monotonicity. To evalutate the theory of \Cref{sec:features}, we study the performance of MOCA contracting after introducing a probability $\alpha$ of failing to detect contractible behaviour. This frames the detection of contractible behaviour as a \textit{random feature}, and our contracts can be understood as contracting on such random features. Viewed in this way, we can predict from \Cref{thm:randomfeat} that welfare should decrease with increasing $\alpha$, which is exactly what we observe in \Cref{subfig:monotonic_reward}. The decay, however, is far from linear, as we see a plateau for middle values of $\alpha$, and a sharp drop from $\alpha=0.75$ to $\alpha=1.0$. Moreover, our theory does not predict anything about which parameter is selected at equilibrium, and none in particular emerges from this experiment, as seen in \Cref{subfig:monotonic_params}. Note that $\alpha=0$ is completely equivalent to standard MOCA contracting, and $\alpha=1$ is completely equivalent to totally separate training, without contracts. This experiment was run on Cleanup with 4 agents, with the contractible behaviour being the agent cleaning the river, and was averaged over 5 seeds, the error plotted being one standard deviation.}

\label{fig:monotonic}
\end{figure}

\section{Related Work}\label{sec:related} % TODO: ANDY ADD JOURNAL ARTICLES INTO RELATED WORK kimplementation and duetting
We review related work in Computer Science and Economics. 

Our work intends to mitigate social dilemmas in games. In addition to classical static social dilemmas such as Prisoner's Dilemma (compare \cite{tucker1983mathematics}), a public goods game (compare \cite{janssen2003adaptation}) and Stag Hunt \cite{rousseau1985discourse}, we also consider more complex social dilemmas such as the Harvest and Cleanup domains of \cite{hughes2018inequity}. Cooperation and prosociality in complex domains are of keen interest to MARL researchers, with related challenges including dilemmas like Gathering and Wolfpack \cite{leibo2017multi}, the StarCraft challenge \cite{samvelyan2019starcraft}, or more recently with MARL results in Diplomacy \cite{kramar2022negotiation}.

We relate to the study of augmentations of Markov games to enhance pro-social behavior. Gifting \cite{lupu2020gifting,wang2021emergent} is as augmentation expanding agents' action spaces to allow for reward transfers to other agents. \cite{wang2021emergent} prove that gifting is unable to change the set of Nash equilibria of the underlying game. Our approach differs from gifting in that contracting forces commitment to a given modification of reward \emph{before} taking an action in the original game, and that this commitment is binding for the length of time the contract is in force. This improves total welfare by allowing incentivize play that is not a Nash equilibrium in the original game. Relatedly, (binding) contracts in the sense of \citet{hughes2020learning} are contracts that force certain actions and do not allow for reward transfers. They will only ever be enacted when all agents are made better off when a contract is accepted, greatly limiting the effectiveness of the system design. For example, no agent will ever consent to a binding contract cleaning trash in Cleanup. Further, since binding contracts are action-level contracts, the system designer would have to manually encode a \enquote{clean trash} policy, instead of relying on a reward signal to \emph{incentivise} them to clean.

A large class of prior work has considered different forms of agent commitments. \cite{smith1980contract} proposes the contract net protocol, which allocates tasks among agents and commits them to complete a particular task. \cite{hughes2020learning} is a recent study in this line of work. The paper considers multi-agent zero-sum games in which agents may give other agents the option to commit to taking a particular action. \cite{hughes2020learning}'s contracts---which we will call \emph{binding contracts} as opposed to our concept of \emph{formal contracts}---might be insufficient to induce the desired joint behavior, as agents will only commit to actions which improve their own individual welfare over the basic game, leading to Pareto optima, which may not be jointly optimal. \cite{Han2017} considers the idea of commitments, without transfers, in evolutionary game theory. The paper \cite{mcaleer2021improving} lets an auxiliary agent propose a Pareto-optimal equilibrium in a game. \cite{sandholm1996advantages} proposes to allow agents to be able to decommit from a task and paying a side payment. Our approach can be seen as \enquote{soft commitment} in which agents always only incur a cost in terms of reward when taking different actions, but are not forced to take a particular action. \cite{sodomka2013coco} considers in 2-player games the proposal of commitment and side payments, and reaches social cooperation. We provide a general approach that only considers reward transfer without the need to commit to actions. The idea of negotiations between rounds to arrive to a commitment to an action, was considered in \cite{de2020strategic}. Formal contracting does not have the dynamic structure of a negotiation and lets a proposing agent make a take-it-or-leave-it offer. Other related publications study the emergence and learning of social norms \cite{vinitsky2021learning,koster2022spurious} as well as conventions \cite{koster2020model}. In contrast to the present paper, these do not require explicit consent by agents.

Another related paradigm to ours is Stackelberg Learning. In such models, typically, a special agent, the principal, optimizes incentives for other agents in a bi-level optimization problem. Stackelberg learning has received a lot of attention in strategic Machine Learning \cite{hardt2016strategic,milli2019social,zrnic2021leads} and has been used to learn large scale mechanisms such as auctions \cite{duetting,curry2022differentiable}. Also, \cite{yang2020learning}'s approach to learning to incentivize other learning agents may be seen as a Stackelberg Learning. \cite{kimplementation} considers a mediator's problem in Normal-Form Games. In contrast to Stackelberg Learning, the focus of formal contracts is that no additional agents---Stackelberg leaders---are introduced into an environment, but proposing agents are part of the environment.

Formal contracts have been considered as an alternative to relational contracts in the fields of organizational Economics, see \cite{gil2017formal} and \cite[5.2.3]{gibbons2005four}. The setting of an agent proposing state-dependent reward transfers has received considerable attention in contract economics, compare the literature following \cite{mussa1978monopoly}, and mechanism design, compare \cite{hurwicz1973design,vickrey1961counterspeculation,myerson1981optimal}. In our proof of \Cref{thm:main}, we use a \emph{forcing contract}, compare \cite{holmstrom1979moral}.
\section{Discussion}\label{sec:conclusion}
We discuss that the assumption that a single agent proposes a contract is crucial for our results, and its fairness implications, in \Cref{subsec:limits}. Finally, we discuss approaches to scaling formal contracting to more complex domains in \Cref{subsec:scaling}. We discuss fairness concerns in contracting in \Cref{subsec:fairness}.
\subsection{Limitations for Formal Contracting}\label{subsec:limits}
One crucial assumption in our analysis is that a single agent proposes contracts. Game-theoretic analysis, given in \Cref{apx:proofs}, shows that if two or more agents may propose in a game, SPEs may be socially suboptimal. The intuition is that an agent $i$ may choose a contract to affect the state distribution in a way that gives the agent a rejection reward when $j$ proposes a contract, hence increasing their reward when contracting. Our game-theoretic analysis also showed that unfair outcomes might result from contracts. As hence proposal by different agents and joint optimal behavior are incompatible, system designers who would like to ensure fairness need to design contracts in a way that limit the number of transfers that can be made, potentially at the expense of welfare.

\subsection{Scaling Formal Contracting}\label{subsec:scaling}
The clearest avenue for future work is in scaling contracts to more realistic domains. Here, we outline three ways to do that.

First, contracts in this work were hand-engineered with relevant internal logic, in order to make the transfers a useful signal. For this approach to scale, a complexity tradeoff must be managed: Contracts need to be flexible enough to extract enough relevant information to incentivize welfare-optimal play, while being simple enough for MARL agents to allow fast learning of which contracts to choose resp. accept. General techniques allowing to choose contracts would greatly improve the scalability of the method.

Manually managing this tradeoff is undesirable. In particular, there may be domains whose social dilemmas are not transparent, or it might be hard to design concise yet powerful contractible observation models. Therefore, learning which aspects of a state are useful for contracting will allow us to scale the approach to more realistic scenarios while keeping contract space sizes low.
     
Even with a fixed contract space, sample efficiency may be improved. MOCA took a first step into improving contract learning, by decreasing the bias in estimated $V_i^{\boldsymbol\pi} (s_0, \boldsymbol\theta)$ values. MOCA outperformed benchmarks in all, even complex, dynamic, environments that exhibited a social dilemma. Increasing sample efficiency will allow using formal contracting to mitigate social dilemmas in even more complex domains. One potential way to increase the sample efficiency of the first phase of MOCA is to leverage more of the literature on multi-task reinforcement learning methods \cite{devin2017learning, rusu_policy_2015, parisotto_actor-mimic_2015, teh2017distral}.

\subsection{Fairness}\label{subsec:fairness}
One striking observation in the proof of \Cref{thm:autspace} is that agents $j \in [n] \setminus \{i\}$ are indifferent between accepting the contract $\boldsymbol\theta^*$ and not accepting it. Hence, the contract may lead to an improvement in welfare relative to $\bpi$, but no agents except agent $i=1$ gets any benefit from this improvement. In many decentralized learning tasks, this is not of concern. For example, if a robot fleet needs to coordinate which locations to service, low reward for some of the agents is not of concern, as long as system performance (that is, welfare) is high. In others, this behavior may be unfair. Trading off fairness and welfare is an avenue for future work.
\bibliography{refs}
\appendix

\clearpage 
\section{Omitted Proofs}\label{apx:proofs}
We provide the proof of \Cref{thm:detecatbledev}, the result \Cref{thm:main} is a direct consequence.
\begin{proof}[Proof of \Cref{thm:detecatbledev}]
Consider any ${\bTheta} \supseteq {\bTheta}_{\operatorname{\operatorname{full}}}$. Fix an arbitrary socially optimal policy profile $\bpi^*$, and consider a forcing contract ${\btheta}^*$ which punishes any deviation from $\bpi^*$ by $-\frac{R_{\text{max}}}{\gamma^T(1-\gamma)}$, where $T$ is the time that a deviation is detected. Let the signing transfer $\bpi^*$ be such that all payoff gain for a non-proposer due to $\btheta^*$ over no contract is transferred to the proposer. Note that the punishment is a larger negative reward than any reward that can be achieved in the game from the outset. Hence, after acceptance of the contract, $\bpi^*$ constitutes an SPE. By assumption, after rejection of the contract, it also constitutes an SPE. By definition, all agents but the proposer are indifferent between accepting and rejecting the contract. In particular, accepting it is an SPE. It remains to show that the proposer cannot get more payoff than under $\btheta^*$. All other agents need to get at least $V_i^{M, \bpi}(s_0)$ to accept a contract, it is impossible for the proposer, in any contract following a contract that is accepted, to get more welfare than 
\[
W^{\bpi^*}(s_0) - \sum_{i=2}^n V_i^{M, \bpi_0}(s_0).
\]
This is exactly like the payoff that $\btheta^*$ yields.

It remains to show that all SPE must lead to a jointly optimal policy profile in $M+\btheta$ for the contract proposed by $\btheta$. First observe that it cannot be optimal for the proposer to propose a contract that wouldn't be accepted (proposing $\btheta^*$ would yield higher payoff). Hence, a contract that is accepted is proposed. Note that if a jointly sub-optimal policy profile $\bpi'$ would be played following the proposed contract $\btheta$, the proposer payoff would be bounded by 
\[
W^{\bpi'}(s_0) - \sum_{i=2}^n V_i^{M, \bpi_0}(s_0) < W^{\bpi^*}(s_0) - \sum_{i=2}^n V_i^{M, \bpi_0}(s_0),
\]
which is lower than what the proposer would get under $\btheta^*$.
\end{proof}

\begin{proof}[Proof of \Cref{thm:autspace}]
As in the proof sketch, we only consider the first inequality in depth here only: the upper inequality follows by exchanging WCSPW for BCSPW, \enquote{worst case subgame-perfect equilibrium} with \enquote{best case subgame-perfect equilibrium}, changing $\argmin$ into $\argmax$, and exchanging inequalities in the second part of the proof and the section headings. 

We first show the second claim -- that is, assuming $\theta^*$ and $\pi^*$ are attained, we wish to construct an SPE attaining $W(\mathcal{C})$. Fix a Markov game $M$, a contracting space ${\bTheta}$ that admits arbitrary unconditional transfers to augment it and an arbitrary observation model $O_0$, giving contracting model $\mathcal{C}$. As the supremum in \eqref{eq:attainmentofsupremal} is attained, there is an SPE attaining ${\btheta}^*$ in $M^{\mathcal C}$. Let ${\btheta}^* \in \argmax_{{\btheta} \in {\bTheta}} \operatorname{WCSPW}({\btheta})$, and
\[
\bpi^* \in \argmin_{\bpi \in \operatorname{SPE}(M+{\btheta}^*)} W^{\bpi}(s_0)
\]
Then, take
\[
\bpi \in \argmin_{\bpi \in \operatorname{SPE}(M)} W^{\bpi}(s_0)
\]
Moreover, define the following vector
\[
\mathbf v = \begin{pmatrix}
    \sum_{i=2}^N V_i^{M+{\btheta}^*, \bpi^*}(s_0) - V_i^{M, \bpi}(s_0) \\
    V_2^{M, \bpi}(s_0) - V_2^{M+{\btheta}^*, \bpi^*}(s_0) \\
    \vdots \\
    V_N^{M, \bpi}(s_0) - V_N^{M+{\btheta}^*, \bpi^*}(s_0)
\end{pmatrix}
\]
and the following contract
\[
\theta^*_{\text{tr}}(o) = \begin{cases}
   \mathbf{v} & o = \mathbf{\operatorname{acc}}\\
   \theta^*(o) & o \neq\mathbf{\operatorname{acc}}
\end{cases}
\]
Since ${\btheta}$ admits arbitrary unconditional transfers, we have ${\btheta}^*_{\text{tr}} \in {\bTheta}$. Then, the following is an SPE: first, the proposing agent proposes ${\btheta}^*_{\text{tr}}$, all agents accept this contract: for other contract choices, agents accept or reject based on whether the expected value under worst-case SPE in $M+{\btheta}$ is higher than $V_i^{M, \bpi}(s_0)$ (and accepting in the case of a tie). Once contracts ${\btheta}$ are selected, in all subgames $M+{\btheta}$, agents play the worst-case SPE in $M+{\btheta}$. 
    
To argue this is an SPE, we proceed by backwards induction: once contracting is accepted or rejected, each agent plays a worst-case SPE, and moreover agents decide to accept or reject contracts appropriately according to the expected value under both. Then, it simply suffices to argue that ${\btheta}^*_{\text{tr}}$ gives the proposer maximal value amongst all choices of contract, given the responses of other agents subsequently. Upon playing ${\btheta}^*_{\text{tr}}$, agents have their value under the contract extracted exactly to the point of indifference ($V_i^{M, \bpi_0}(s_0)$), and hence accept the contract. They then play $\bpi^*$, given that $M+{\btheta}^*_{\text{tr}}$ has exactly the same SPE as $M+{\btheta}^*$, as all comparisons of value are unaffected by a constant shift of reward in all episodes. The proposing agent therefore gets payoff:
\begin{multline*}
    V_1^{M+\btheta^*, \bpi^*}(s_0) + \left(\sum_{i=2}^n V_i^{M+\btheta^*, \bpi^*}(s_0) - V_i^{M, \bpi}(s_0)\right)
    = W^{\bpi^*}(s_0) - \sum_{i=2}^n V_i^{M, \bpi}(s_0)
\end{multline*}
This is maximal: suppose for contradiction a contract giving strictly higher payoff for the proposer was available in ${\btheta'}$, which leads to equilibrium profile $\bpi'$. If the proposer proposes a contract rejected in the SPE, then no contract is implemented and agents play an SPE in $M$, giving proposer value
\[
V_1^{M, \bpi'}(s_0) = V_1^{M, \bpi}(s_0) =  W^{\bpi}(s_0) - \sum_{i=2}^n V_i^{M, \bpi}(s) \le W^{\bpi^*}(s) - \sum_{i=2}^n V_i^{M, \bpi}(s_0).
\]
Therefore, for a contract ${\btheta'}$ which is accepted, and with worst-case equilibrium $\bpi'$ played in $M+{\btheta'}$, transfers from non-proposing agent $i>1$ to the proposing agent cannot be greater than $V_i^{M+{\btheta'}, \bpi'}(s_0) - V_i^{M, \bpi}(s_0)$. Moreover, the proposing agent has a reward of (1) its own value under the game, plus (2) the time-discounted transfers through the game $M+{\btheta'}$. From this:
\begin{align*}
    V^{\mathcal{C}}((1, {\btheta'})) &= V_1^{M+{\btheta'}, \bpi'}(s_0) + \mathbb{E}_{\bpi'}\left[\sum_{t=0}^\infty \gamma^t{\btheta'}_1(o_{0,t})\right] + {\btheta'}_1(\textrm{acc}) \\
    &\le V_1^{M+\btheta', \bpi'}(s_0) + \left(\sum_{i=2}^n V_i^{M+\btheta', \bpi'}(s_0) - V_i^{M, \bpi}(s_0)\right)  \\
    &= W^{\bpi'}(s_0) - \sum_{i=2}^n V_i^{M, \bpi}(s_0) \\
    &\le W^{\bpi^*}(s_0) - \sum_{i=2}^n V_i^{M, \bpi}(s_0).
\end{align*}
This establishes that the policy profile here is indeed an SPE for the extended game $M^{\mathcal{C}}$. 

Now on to the first claim. Let $\bpi$ be any SPE for $M^{\mathcal C}$, $\bpi_0$ the equilibrium of $M$ that is played absent a contract in force. Denote $\bpi_{\btheta}$ the profile of sub-policies following the proposal of contract $\btheta$. We show that $\bpi$ must achieve at least welfare $\mathcal W (\mathcal C)$. 

Let $\varepsilon > 0$. As the contract space admits arbitrary unconditional transfers, for any choice of contract $\btheta \in \bld \Theta$, the proposer can attain, by selecting some $\btheta' \in \bTheta$ and unconditionally transferring value,
\begin{equation}
V_1^{{\mathcal{C}}}((1,{\btheta}')) = W^{M+{\btheta}, \bpi_{\btheta}}(s_0) - \sum_{i=2}^n V_i^{M, \bpi_0}(s_0) - n \varepsilon \ge \operatorname{WCSPW}_{\mathcal C} (\btheta) - \sum_{i=2}^n V_i^{M, \bpi_0}(s_0) - n \varepsilon,\label{eq:lower_bound_on_welfare}
\end{equation}
where each agent gets $\varepsilon>0$ beyond their non-contract value as an incentive to accept the contract. Under such $\btheta'$, 
\[
V_i^{{\mathcal{C}}}((1,{\btheta'})) = V_i^{M, \bpi_0}(s_0) + \varepsilon
\]
for $i=2, 3, \dots, N$. Hence, as $\varepsilon > 0$ can be arbitrarily small, substituting in and rearranging \eqref{eq:lower_bound_on_welfare}, we get 
\begin{equation}
W^{M+{\btheta'}, \bpi_{\btheta'}}(s_0) \ge \operatorname{WCSPW}_{\mathcal C} (\btheta). \label{eq:bestresponse}
\end{equation}
Now, for all $\btheta_{SPE}$ in the support of SPE $\bpi$, these must be a best response for the proposer among all choices of contract. In particular, they must be better than any choice of $\btheta'$ -- hence, substituting \eqref{eq:bestresponse}, supremizing over $\Theta$, and letting $\varepsilon \to 0$  gives
\[
W^{M+{\btheta_{SPE}}, \bpi_{\btheta_{SPE}}}(s_0) \ge \sup_{\btheta \in \bTheta} \operatorname{WCSPW}_{\mathcal C} (\btheta) = \mathcal W(\mathcal C),
\]
which proves the claim.
\end{proof}

\section{Contracting With Repeated Proposal}
Next, we prove that the statements of \Cref{thm:main} continue to hold if we have an agent $i$ that repeatedly proposes contracts. For a Markov game $M$, we say that a state $s \in S$ is unreachable from state $s_0$ is there is no sequence of action profiles $\mathbf \mathbf a_1, \mathbf \mathbf a_2, \dots, \mathbf \mathbf a_{t-1} \in \mathbf A$ such that  $s_t = s$. 
\begin{proposition}
Consider a general formulation in the sense of \Cref{apx:formulation} such that there is an agent $i$ such that $(\boldsymbol 0, j)$, $j \neq i$ is unreachable from $(\boldsymbol 0, i)$, and that $(s, \boldsymbol 0)$ is unreachable from $(s', \boldsymbol{\btheta})$ for any $s, s' \in S$, $\boldsymbol{\btheta} \in  \boldsymbol{\btheta}$. Then, the conclusions of \Cref{thm:main} continue to hold.
\end{proposition}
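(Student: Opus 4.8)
The plan is to re-run the three-lemma argument behind \autoref{thm:main}, organized around one structural observation that makes the repeated-proposal case no harder than the single-proposal one: since every transfer (including the signing transfer) is zero-sum across agents, the welfare $W$ of any policy profile in $M^\Theta$ equals the welfare of the play it induces in the underlying game $M$. Consequently $W \le W(\pi^*)$ always, with equality exactly when the induced trajectory is socially optimal. This welfare-invariance is what lets us avoid ever computing the rejection-continuation values explicitly, which is where the recursion introduced by repeated proposals would otherwise bite.

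First I would fix an arbitrary SPE and analyze the subgame rooted at the initial proposing state $((s_0,\emptyset),i)$. Let $\pi$ denote a continuation SPE of the subgame reached if agent $i$'s proposal is rejected; the two hypotheses guarantee this rejection subgame is again an instance satisfying the proposition's assumptions (agent $i$ is still the only agent who can ever propose, and once a contract is signed the null regime is never re-entered), so the values $V_j(\pi)$ are well defined. As in the first lemma, each agent $j \neq i$ rejects any proposal leaving it strictly below its reservation value $V_j(\pi)$, so agent $i$ can capture at most $W(\pi^*) - \sum_{j\neq i} V_j(\pi)$. Next, mirroring the second lemma, I would have agent $i$ propose a single global forcing contract: at every state it transfers $R_{\max}/(1-\gamma)$ away from any unilateral deviator from $\pi^*$, and it carries a signing transfer giving each $j\neq i$ exactly $V_j(\pi)$ (and, exactly as in \autoref{thm:main}, under detectable deviators this contract depends only on the current and next state, so dimension $\lvert S\rvert^2$ again suffices). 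Here the second hypothesis does the real work: because the contracted regime is absorbing, this one contract enforces $\pi^*$ for the entire remaining horizon without agent $i$ ever needing to re-propose, so the induced welfare is $W(\pi^*)$. Welfare-invariance then gives $V_i(\pi) = W(\pi) - \sum_{j\neq i}V_j(\pi) \le W(\pi^*) - \sum_{j\neq i} V_j(\pi)$, so accepting this contract is weakly better for agent $i$ than forcing its own rejection, and it attains the upper bound.

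Finally, to rule out socially suboptimal SPE I would reuse the $\varepsilon$-perturbation argument of the third lemma: if the accepted play had welfare below $W(\pi^*)$, then by welfare-invariance and the reservation bounds agent $i$ would receive strictly less than $W(\pi^*) - \sum_{j\neq i}V_j(\pi)$, yet could deviate to the forcing contract above with each signing transfer slackened by $\varepsilon/(N-1)$, which every $j$ strictly accepts and which gives agent $i$ strictly more value than any equilibrium candidate arbitrarily close to the bound but never at it, contradicting subgame perfection. Hence every SPE induces welfare $W(\pi^*)$, and any trajectory achieving $W(\pi^*)$ follows a socially optimal policy, i.e. $\pi(s,\theta)=\pi^*(s)$ after acceptance.

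I expect the crux to be arguing that the two hypotheses are exactly what close the gaps opened by repeated proposals. The unique-proposer hypothesis rules out the failure mode flagged in \autoref{sec:conclusion}, where a second proposer manipulates the induced state distribution to inflate its own rejection value; with only agent $i$ ever proposing, the rejection subgame remains a legitimate instance and the residual-claimant logic propagates through every subgame. The absorbing-contract hypothesis is what guarantees a single forcing contract suffices, so that agent $i$'s extra proposal opportunities neither let it renege on enforcement nor are needed to sustain it. I would also invoke the undiscounted-negotiation convention of \autoref{apx:formulation}, which is the one place where discounting could otherwise make agent $i$ prefer a suboptimal contract merely to economize on the number of negotiation periods.
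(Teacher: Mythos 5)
Your proof is correct and takes essentially the same route as the paper's: it re-runs the three-lemma argument of \autoref{thm:main} (the reservation-value upper bound $W(\pi^*) - \sum_{j\neq i} V_j(\pi)$, the forcing contract with signing transfers that attains it, and the $\varepsilon$-perturbation showing no SPE can reject all such contracts), using the two unreachability hypotheses to ensure the rejection subgame is again an instance of the same problem and that an accepted contract stays in force permanently. The only difference is organizational: the paper compresses the recursion introduced by repeated proposals into an appeal to the single-deviation principle, whereas you handle it explicitly via the welfare-invariance of zero-sum transfers and the absorbing-contract-regime observation.
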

\begin{proof}
First, observe that the strategies in which agent $i$ repeatedly proposes a contract $\boldsymbol{\btheta}^*$ as in the proof of \Cref{thm:main} is an SPE: At each proposal and acceptance state, the agent can assume that the value is as if the agent proposes an efficient contract in the next proposal state. As in previous proofs more informally, we invoke the single-deviation principle for SPE (compare \cite{osborne1994course}) stating that to check an SPE, it is sufficient to only consider deviations by one agent at a time. First, observe that there is no reason for the proposer to propose another contract to the other agents, as the upper bound on value demonstrated in \Cref{thm:main} continues to hold. The forcing contracts still exist and incentivize all agents to follow $\bpi^*$. Finally, rejection of all contracts is again not possible as the proposing agent's best response would not be defined. Hence, the statement continues to hold for a repeatedly proposing agent.

The uniqueness of such an equilibrium follows equivalently, as an upper bound on the proposal agent is exactly attained at each proposal point.
\end{proof}
Finally, we show by means of an example that this statement breaks if multiple agents may propose a contract.
\begin{example}\label{prop:different}
Consider a repeated Prisoner's Dilemma in which agent 1 may propose as long as $(C,D)$ has been played. Then, agent $2$ proposes thereafter. Consider Prisoner's Dilemma, \Cref{subfig:pd}. An SPE of this game involves proposal of a forcing contract on $(C,D)$ with values after a signing bonus that leads to values $(-2 + \varepsilon,-1 - \varepsilon)$. All contracts that give agent $1$ more value are rejected. It is a best response for agent $2$ to accept, as long as $(-1-\varepsilon)/(1-\gamma)$, the value of accepting this contract, is larger than $-2 + 0 \times \gamma/(1-\gamma) = -2$. The best such contract is the one with the largest $\varepsilon$ for which this inequality holds.
\end{example}

\section{Contracting Augmentations with Conditional Proposals}\label{apx:formulation}
In this section, we present the augmentation for several proposing agents. Let $M = \langle S, s_0, \mathbf A, T, \mathbf R, \gamma\rangle$ be an $n$-agent Markov game, ${\btheta}$ be a contract space as before. To generalize, we require one additional component: define the \emph{contracting initiation dynamics} to be a function
\[
P \colon \{(\boldsymbol{\btheta} \cup \{\boldsymbol 0\}) \times S \times \mathbf A\} \cup \{\textrm{init}\} \to \Delta ([n] \cup \{\boldsymbol 0\} \cup \{\bullet\})
\]
which determine whether or not a new contract phase begins or ends at a given state of $M$, and if one begins, which agent is proposing the contract. More specifically, sampling from $P(\textrm{init})$ at the start of an episode, or from $P(\boldsymbol{\btheta}, s,a)$ at state-action $(s,\mathbf a)$ under contract $\boldsymbol{\btheta}$, either:

\begin{enumerate}
    \item Agent $i$ is given the option to propose a new contract, and the game is frozen ($i \in  [n]$); 
    \item None of the agents proposes a contract, but the current contract stays in force, and the game continues ($\mathbf 0$), or; 
    \item The current contract becomes void, and no agent proposes a new contract, and again the game continues ($\bullet$)
\end{enumerate}

We again assume that the contracting space ${\btheta}$ contains the null contract $\mathbf 0 (s,\mathbf a) \equiv 0$. Define the contract-augmented Markov game $M^{{\btheta}} = \langle S', S_0, \mathbf A', T', \mathbf R', \gamma\rangle$ as
 \begin{align*}
S' &= (S \cup (S \times \{\mathbf 0\})) \times ({\btheta} \cup [N])  \\
A' &= A \cup {\btheta} \cup \{\text{acc}\}\\
T'((s, {\btheta}),a) &= \begin{cases} T(s,\mathbf a) & P({\btheta}, s,\mathbf a) = \boldsymbol 0 \\
(s, \boldsymbol 0) & P(\boldsymbol{\btheta}, s, \mathbf a) = \bullet\\
(\boldsymbol 0, i) & P(\boldsymbol{\btheta}, s, \mathbf a)= i\end{cases}\\
T'(((s,\boldsymbol 0), i), (\boldsymbol{\btheta}, \mathbf a_{-i})) &= ((s,\boldsymbol 0), \boldsymbol{\btheta})\\
T'(((s,\boldsymbol 0), \boldsymbol{\btheta}), \mathbf a) &= \begin{cases}  (s, {\btheta}) & \mathbf a = \textbf{acc}\\ (s,\boldsymbol 0) & \text{else.}\end{cases}\\
R((s, {\btheta}), \mathbf a) &= R(s, \mathbf a) + {\btheta}(s,
\mathbf a).
\end{align*}

with $S_0$ being state $((s_0, \boldsymbol 0), i)$ (e.g. agent $i$ proposing a contract at the start of an episode) with probability $\mathbb{P}[P(\textrm{init})=i]$ and $(s_0, \boldsymbol 0)$ otherwise. Note that, in contrast to the definition of $S^\prime$ in \Cref{sec:augmentation}, we use tuples $(s, \boldsymbol 0)$ more generally to denote cases where state $s$ is frozen in the game while agents are negotiating a contract (which can now happen during the game, and not just at the start). For example, to capture the contracting initiation dynamics of \Cref{sec:augmentation} in this general model, we have $P(\textrm{init}) = 1$ almost surely, and $P(\cdot) = \boldsymbol 0$ almost surely for any other state (i.e., contracting is initiated and proposal rights given to agent 1 with certainty at the start of an episode, and that contract is held in force for the remainder of the episode).

More complex augmentations can be contructed in a similar way. Examples of such augmentations are multiple contracts in force at the same time, majority of acceptance as opposed to unanimity, and contract overriding only if new contracts are accepted. Exploring such augmentations is an avenue for future work.
\section{Hyperparameter Settings}\label{apx:hyperparameters}
In this section, we describe the hyperparameters we experimented with in the process of developing the study of \Cref{sec:experiments}, and emphasize the ones we ended up using in the final results in \Cref{fig:matrix}. 

\subsection{Optimization Details} In all methods, we used a learning rate of $\alpha = 10^{-4}$. Experimentation with $\alpha = 10^{-2}, 10^{-6}, 10^{-8}, 10^{-10}$, as well as a linearly decaying learning schedule from $\alpha=10^{-2}$ to $10^{-10}$, none of which improved performance on any method, the smaller learning rates significantly degrading the sample efficiency of the methods involved. 

Training happened using stochastic gradient descent, with $30$ SGD updates per training iteration and a momentum of $0.99$. For all methods second phase of MOCA, training batches consisted of ${12000}$ sampled timesteps for the static dilemmas, and ${120000}$ sampled timesteps for the dynamic games, all of them with minibatches of size ${4092}$. Other minibatch values attempted for these methods were $128, 256, {1024}$ and ${40000}$, all of which were found to degrade the performance, the first two very significantly. However, in training $\boldsymbol\bpi(s, \boldsymbol{\btheta})$, we restricted ourselves to 10\% of timesteps used in training $\boldsymbol\bpi(\boldsymbol 0, \boldsymbol 0)$ (and the baseline algorithms), and therefore found that sampling $128$ episodes per training iteration, and using a minibatch size of $128$ (in all domains) was sufficient for strong performance.

\subsection{Model Architectures} On all algorithms, we experimented with $32 \times 32$, $64 \times 64$, $256 \times 256$, $64 \times 64 \times 64$, $256 \times 256 \times 256$, and $1024 \times 1024$ MLP networks, and took the models of minimal complexity attaining maximal performance. For separate training, gifting, and contracting, we found $64 \times 64$ MLP networks to be this value, while for joint training, due to the added complexity of the joint policy, we found that expanding the network to a $256 \times 256$ MLP architecture yielded the highest performance.

\subsection{Environment Horizon and Discount Factors} All domains have discount factor $\gamma = 0.99$. Moreover, the maximum horizon for the matrix domains was $2$, for the Public Goods Dilemma was $100$, for Emergency Merge was $200$ (although episodes can terminate earlier by cars reaching the end early), and for both Harvest and Cleanup was ${1000}$. 

\subsection{PPO-specific parameters} We used a standard KL-coefficient of $0.2$,  KL-target of $0.01$, clip parameter of $0.3$, value function coefficient of $1.0$, entropy coefficient of $0.0$, for all experiments in all domains.  

\subsection{Novel hyperparameters} As mentioned in the main text, we introduced a novel hyperparameter $\boldsymbol{\nu}\boldsymbol{\nu}$, governing the number of agents sampled in the accept-reject decision for contracts. In all experiments, we used $\boldsymbol \nu  = 2$. \\

Other unmentioned hyperparameters are as found in the default settings in the RLLib implementation of PPO \cite{liang2018rllib}. 

\section{Additional Experiments}\label{apx:experiments}
In the Stag Hunt game (see \Cref{fig:staghuntgame} for a game table for two players), we observe a similar pattern as in the other studied games, outperforming separate training and gifting in the 2, 4, 8 agent cases. Additionally, in particular for larger domains, contracting outperforms joint training, due to hardness of exploration of the joint action space. Gifting does not even outperform separate training in this domain. \\

To scale this game to higher number of players, in general all agents get payout equal to the number of agents if all cooperate, and payout equal to 11 if all defect. If agents both cooperate and defect, then the defecting agents get $N - 1$ and cooperating agents get 0.\\

See \Cref{fig:staghunt} for experimental details. 
\begin{figure}[p]
    \centering
    \begin{subfigure}[b]{.32\linewidth}
    \centering
    \includegraphics[width=\linewidth]{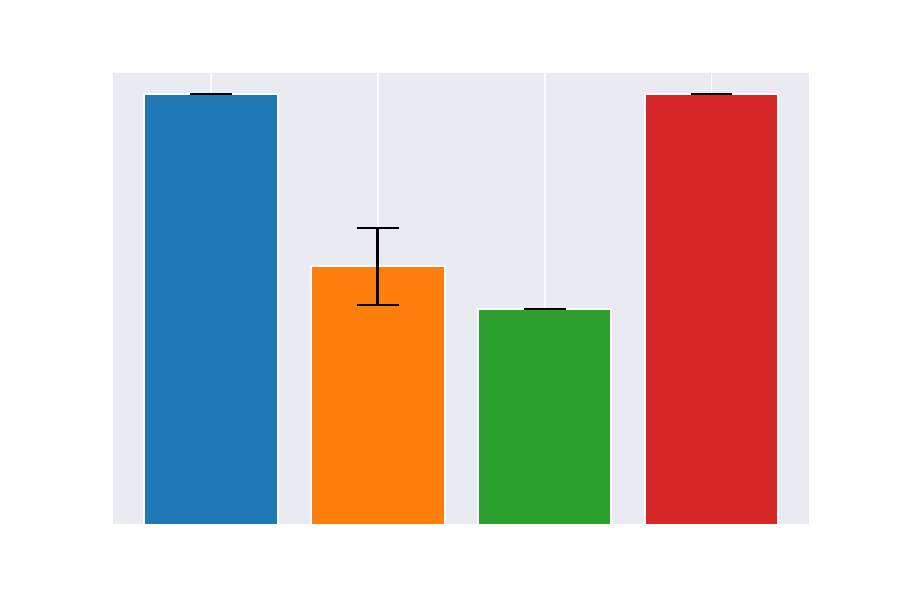}
    \caption{2 agents}
    \end{subfigure}
    \begin{subfigure}[b]{.32\linewidth}
    \centering
    \includegraphics[width=\linewidth]{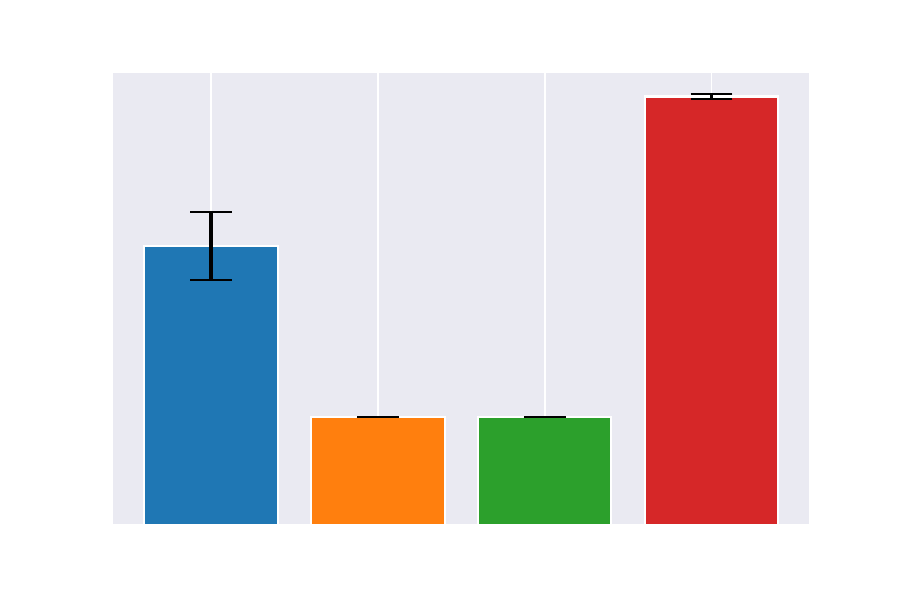}
    \caption{4 agents}
    \end{subfigure}
    \begin{subfigure}[b]{.32\linewidth}
    \centering
    \includegraphics[width=\linewidth]{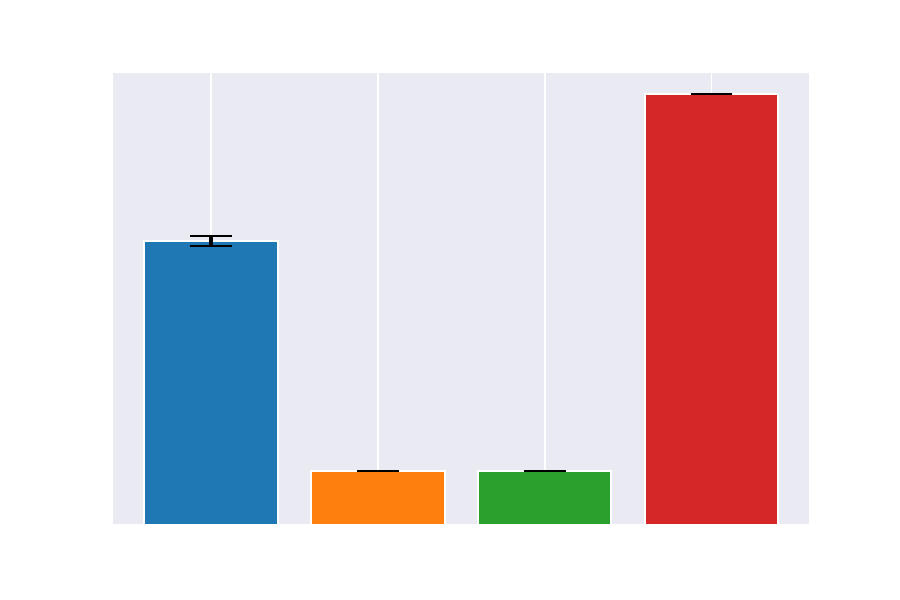}
    \caption{8 agents}
    \end{subfigure}
    \caption{Empirical Results for Stag Hunt. As in the case for Prisoner's Dilemma, contracting outperforms both separate and joint training, and always performs at least as well as joint training, significantly outperforming it for high agent counts. As in the Prisoner's Dilemma case in \Cref{fig:matrix}, 5 trials were conducted, algorithms run for 1M timesteps, and the standard error across these trials is reported in the error bars.}
    \label{fig:staghunt}
\end{figure}
\begin{figure}[p]
\centering
\begin{tabular}{|c|c|c|}
\hline 
& C & D \\
\hline 
C & 2,2 & 0,1 \\
\hline 
D & 1,0 & 1,1 \\
\hline
\end{tabular}
\caption{Stag Hunt}
\label{fig:staghuntgame}
\end{figure}

\end{document}